\definecolor{SynBlue}{HTML}{004488}
\definecolor{SynYellow}{HTML}{ddaa33}
\definecolor{SynRed}{HTML}{bb5566}
\DeclareMathOperator{\diag}{diag}
\newcommand{\argmax}{\operatornamewithlimits{argmax}}
\newcommand{\given}{\,|\,}
\def\md{\,{\mathrm d}}
\def\bR{{\mathbb R}}
\def\NPDF{{\mathcal N}}
\def\f0{{\mathbf 0}}
\def\Tr{{\operatorname{Tr}}}
\def\IGPDF{{\mathcal{IG}}}
\def\TPDF{{\mathcal T}}
\def\KL{{\operatorname{KL}}}
\def\vect{{\operatorname{vec}}}
\newtheorem{defn}{Definition}
\newtheorem{lem}{Lemma}
\newtheorem{prop}{Proposition}
\newtheorem{propty}{Property}
\theoremstyle{definition}
\newtheorem{remark}{Remark}
\definecolor{bred}{rgb}{0.8,0,0}
\begin{document}

% If your paper is accepted and the title of your paper is very long,
% the style will print as headings an error message. Use the following
% command to supply a shorter title of your paper so that it can be
% used as headings.
%
%\runningtitle{I use this title instead because the last one was very long}

% If your paper is accepted and the number of authors is large, the
% style will print as headings an error message. Use the following
% command to supply a shorter version of the authors names so that
% they can be used as headings (for example, use only the surnames)
%
%\runningauthor{Surname 1, Surname 2, Surname 3, ...., Surname n}

\newcommand{\blindfootnote}[1]{{\renewcommand\thefootnote{$^{\star}$}\footnotetext{#1}}}

\runningauthor{Akyildiz, Van den Burg, Damoulas, Steel}

\twocolumn[%
\aistatstitle{Probabilistic Sequential Matrix Factorization}
\aistatsauthor{
\"Omer Deniz Akyildiz$^{\star,1}$ \And Gerrit J.J. van den Burg$^{\star,1}$ \AND Theodoros Damoulas$^{1,2}$ \And Mark F.J. Steel$^{2}$}

\aistatsaddress{$^{1}$The Alan Turing Institute, London, UK \And $^{2}$University of Warwick, Coventry, UK}
]

\blindfootnote{Joint first authorship.} % or "Equal contribution"

\begin{abstract}
We introduce the probabilistic sequential matrix factorization (PSMF) method for factorizing time-varying and non-stationary datasets consisting of high-dimensional time-series. In particular, we consider nonlinear Gaussian state-space models where sequential approximate inference results in the factorization of a data matrix into a \textit{dictionary} and time-varying \textit{coefficients} with potentially nonlinear Markovian dependencies. The assumed Markovian structure on the coefficients enables us to encode temporal dependencies into a low-dimensional feature space. The proposed inference method is solely based on an approximate extended Kalman filtering scheme, which makes the resulting method particularly efficient. PSMF can account for temporal nonlinearities and, more importantly, can be used to calibrate and estimate generic differentiable nonlinear subspace models. We also introduce a robust version of PSMF, called rPSMF, which uses Student-t filters to handle model misspecification. We show that PSMF can be used in multiple contexts: modeling time series with a periodic subspace, robustifying changepoint detection methods, and imputing missing data in several high-dimensional time-series, such as measurements of pollutants across London.
\end{abstract}

\section{INTRODUCTION}
\label{sec:intro}
The problem of $r$-rank factorization of a data matrix $Y \in \bR^{d\times n}$ as
\begin{align}\label{eq:theMainProb}
Y \approx C X
\end{align}
with $C\in\bR^{d\times r}$ the \textit{dictionary matrix} and $X\in\bR^{r\times n}$ the  \textit{coefficients}, has received significant attention in past decades in multimedia signal processing and machine learning under the umbrella term of matrix factorization (MF) \citep{LeeSeungNMF,lee2001algorithms,mairal2010online}. The classical method for solving problems of the form \eqref{eq:theMainProb} is nonnegative matrix factorization (NMF) \citep{LeeSeungNMF}, which is proposed for nonnegative data matrices and obtains nonnegative factors. NMF and similar methods (e.g., singular value decomposition (SVD)) have been the focus of intensive research  \citep{lin2007projected,berry2007algorithms,ding2008convex,cai2010graph,fevotte2011algorithms} and has found applications in several fields, such as document clustering \citep{shahnaz2006document}, audio analysis \citep{smaragdis2003non,ozerov2009multichannel}, and video analysis \citep{BucakGunselNMF}. This work was extended to general MF problems for real-valued data and factors, which found many applications including collaborative filtering \citep{rennie2005fast} and drug-target prediction \citep{zheng2013collaborative}. 

The problem in \eqref{eq:theMainProb} was originally tackled from an optimization perspective, i.e., minimizing a cost $d(Y,CX)$ over $C$ and $X$ \citep{LeeSeungNMF,lee2001algorithms,lin2007projected,mairal2010online}. Another promising approach has been through a probabilistic model by defining priors on $C$ and $X$. This was explored by, e.g., \citet{cemgil09-nmf} with a Poisson-based model solved using variational inference (which reproduces NMF when the cost function is the Kullback-Leibler divergence) and in, e.g., \citet{mnih2008probabilistic} and \citet{salakhutdinov2008bayesian} with a Gaussian model for the real-valued case solved via Markov chain Monte Carlo (MCMC). %A robust extension of the latter was presented in \cite{lakshminarayanan2011robust}.
Naturally, online versions of these methods have received significant attention as they enable scaling up to larger datasets. On this front, a number of algorithms have been proposed that are based either on stochastic optimization  \citep[e.g.,][]{BucakGunselNMF,mairal2010online,sismanisSGD,mensch2016dictionary} or on a probabilistic model for online inference \citep{wang2012probabilistic,paisley2014bayesian,akyildiz2019dictionary}. However, these methods are generally for i.i.d. data and cannot exploit the case where the columns of $Y$ possess time dependency.

The success of MF in the i.i.d. data case motivated the development of matrix factorization methods for time-dependent data. In this case, the problem can be formulated as inferring  parameters of a dynamical system or a state-space model (SSM), with a linear observation model $C$. This problem is also known as \textit{system identification} \citep{katayama2006subspace}. When the columns of $X$ (i.e. the hidden signal) also evolve linearly the problem reduces to that of inferring parameters of a linear SSM. This can be solved by maximum-likelihood estimation (MLE) through, e.g., expectation-maximization (EM) either offline \citep{ghahramani1996parameter,elliott1999new} or online \citep{cappe2009line,cappe2011online}, or with gradient-based methods \citep{andrieu2005line,kantas2015particle}. In this vein, \citet{yildirim2012online} address the NMF problem by introducing a SSM with a Poisson likelihood where the inference is carried out with sequential Monte Carlo (SMC). In a similar manner \citet{dynamicmatrixfact} propose an SSM-based approach where the dictionary is estimated using the EM algorithm. Similar MLE-based nonnegative schemes that also use SSMs attracted significant attention \citep[e.g.][]{mohammadiha2013prediction,mohammadiha2014state}.

Although MLE estimates are consistent in the infinite data limit for general SSMs \citep{douc2011consistency}, EM-based methods are prone to get stuck in local minima \citep{katayama2006subspace} and provide only point estimates rather than full posterior distributions. As an alternative to the EM-based approaches, optimization-based methods were also explored \citep[e.g.][]{boots2008constraint,karami2017multi,white2015optimal} which again result in point estimates. If the transition model for the coefficients $X$ exhibits nonlinear dynamics while the observation model is linear (by the nature of MF), the problem reduces to parameter estimation in nonlinear SSMs \citep{sarkka2013bayesian}. The MLE approach is again prominent in this setting using EM or gradient methods \citep{kantas2015particle}. However, when inference cannot be done analytically this results in the use of SMC \citep{doucet2000sequential} or particle MCMC methods \citep{andrieu2010particle} (see \citet{kantas2015particle} for an overview). Unfortunately, these methods suffer in the high-dimensional case \citep{bengtsson2008curse,snyder2008obstacles} which makes Monte Carlo-based methods unsuitable for solving the MF problem. Optimization-based approaches that formulate a cost function with temporal regularizers have also been studied \citep[e.g.][]{yu2016temporal,shi2016temporal,liu2016learning,ayed2019learning}.

An alternative to the MLE or optimization-based approaches is to follow a \textit{Bayesian} approach where a prior distribution is constructed over the parameters of the SSM, see, e.g., \citet{sarkka2013bayesian}. The goal is then to obtain the posterior distributions of the columns of $X$ and of $C$. This is also of interest when priors are used as regularizers to enforce useful properties such as sparsity \citep{cemgil09-nmf,schmidt2009bayesian}. In this context, an extension of the NMF-like decompositions to the dynamic setting was considered by \citet{fevotte2013non}, where the authors followed a maximum-a posteriori (MAP) approach. We refer to \citet{fevotte2018temporal} for a literature review of temporal NMF methods. However, these methods are \textit{batch} (offline) schemes and do not return a probability distribution over the dictionary or the coefficients. Joint posterior inference of $C$ and $X$ in a fully Bayesian setting is difficult as it usually requires sampling schemes \citep{salakhutdinov2008bayesian}. To the best of our knowledge, a fully Bayesian approach for sequential (online) inference for matrix factorization that also scales well with the problem dimension has not been proposed in the literature.
 
%  \cred{Some citations to be sorted out, incorporated, mainly for the nonlinear case: \cite{liu2016learning,duncker2019learning}}

% \cred{The following works are more related to the neural network aspects of it, which I couldn't yet make it work: \cite{krishnan2015deep}, \cite{wahlstrom2015learning}, \cite{wahlstrom2015pixels}, \cite{johnson2016composing},  \cite{haarnoja2016backprop}, \cite{karl2017deep}, \cite{fraccaro2017disentangled}, \cite{krishnan2017structured}, \cite{doerr2018probabilistic}, \cite{wang2018learning}, \cite{sen2019think}, \cite{becker2019recurrent}}.

\paragraph{Contribution.} In this work, we propose the probabilistic sequential matrix factorization (PSMF) method by framing our matrix factorization model as a nonlinear Gaussian SSM. Our formulation is fully probabilistic in the sense that we place a matrix-variate Gaussian prior on the dictionary and use a general Markov model for the evolution of the coefficients. We then derive a novel approximate inference procedure that is based on extended Kalman filtering \citep{kalman1960new,mclean1962optimal} and results in a fast and efficient scheme. Our method is derived using numerical approximations to the optimal inference scheme and leverages highly efficient filtering techniques. 

In particular, we derive analytical approximations and do not require a sampling procedure to approximate the posterior distributions. The inference method we provide is explicit and the update rules can be easily implemented without further consideration on the practitioner's side. We also provide a robust extension of our model, called rPSMF, for the case where the model is misspecified and derive a corresponding inference scheme that adopts Student's t-filters \citep{giron1994bayesian,tronarp2019student}. Our methods can be easily tailored to the application at hand by modifying the subspace model, as the necessary derivatives can be easily computed through automatic differentiation.

This work is structured as follows. In Sec.~\ref{sec:model} we introduce our probabilistic state-space model and the robust extension. Next we develop our tractable inference and estimation method in Sec.~\ref{sec:InfAndEst}. In Sec.~\ref{sec:Experiments}, our method is empirically evaluated in different scenarios such as learning structured subspaces, multivariate changepoint detection, and missing data imputation. Sec.~\ref{sec:Conc} concludes the paper.

%The rest of the paper is organized as follows. In Sec.~\ref{sec:model}, we introduce our probabilistic state-space model. Then, in Sec.~\ref{sec:InfAndEst}, we develop a tractable inference and estimation method. In Sec.~\ref{sec:Experiments}, we demonstrate the use of our method in different scenarios such as learning structured subspaces, changepoint detection, and data imputation. Finally, Sec.~\ref{sec:Conc} concludes.

%We note that, our inference method is not based on a variational approximation but instead extended Kalman filtering .\theo{why mention VI here? sounds negative comment} In order to keep our method approximately tractable, we make use of a matrix-variate Kalman filter proposed by \cite{akyildiz2019dictionary} and employ some \theo{further? sounds vague}approximations while deriving the algorithm. The numerical approximations in our method can be seen as \textit{implicit} variational approximations\theo{Do you explain such an analogy in paper?}, since we do not eventually correct the resulting filtering distribution.

%We argue that the filtering approach is more efficient at capturing the structure of the true posterior than the variational approximations which severe multiple dependencies in the model in exchange with computational tractability.
\paragraph{Notation} We denote the $d\times d$ identity matrix by $I_d$ and write $\NPDF(x;\mu,\Sigma)$ for the Gaussian density over $x$ with mean $\mu$ and covariance matrix $\Sigma$. Similarly, $\TPDF(x; \mu, \Sigma, \lambda)$ is the multivariate $t$ distribution with mean $\mu$, scale matrix $\Sigma$, and $\lambda$ degrees of freedom, and $\IGPDF(s; \alpha, \beta)$ is the inverse gamma distribution over $s$ with shape and scale parameters $\alpha$ and $\beta$. Further, $\mathcal{MN}(X; M, U, V)$ denotes the matrix-variate Gaussian with mean-matrix $M$, row-covariance $U$, and column-covariance $V$. Sequences are written as $x_{1:n} = \{x_1,\ldots,x_n\}$ and for a matrix $Z$, $z = \vect(Z)$ denotes vectorization of $Z$. Recall that if $C \sim \mathcal{MN}(C;M,U,V)$, then $c \sim \NPDF(c; \vect(M), V \otimes U)$ where $c = \vect(C)$ and $\otimes$ the Kronecker product \citep{matrixVariateDist}. With $y_k$ and $x_k$ we respectively denote the $k$-th column of the matrices $Y$ and $X$.

%%%%%%%%%%%%%%%%%%%%%%%%%%%
% The Probabilistic Model %
%%%%%%%%%%%%%%%%%%%%%%%%%%%

\section{THE PROBABILISTIC MODEL}\label{sec:model}

We first describe the SSM, which consists of observations $(y_k)_{k\geq 1} \in \bR^d$, latent coefficients $(x_k)_{k \geq 0} \in \bR^r$, and a latent dictionary matrix $C \in \bR^{d\times r}$, as follows
\begin{align}
p(C) &= \mathcal{MN}(C;C_0,I_d,V_0), \label{PriorC} \\
p(x_0) &= \NPDF(x_0; \mu_0, P_0), \label{PriorX} \\
p_\theta(x_k | x_{k-1}) &= \NPDF(x_k; f_\theta(x_{k-1}), Q_k), \label{TransitionX} \\
p(y_k | x_k, C) &= \NPDF(y_k; C x_k, R_k). \label{ObservationMod}
\end{align}
Here, $f_\theta \colon \bR^r \times \Theta \to \bR^r$ is a nonlinear mapping that defines the dynamics of the coefficients with $\Theta \subset \bR^{d_\theta}$ the parameter space, and  $(Q_k,R_k)_{k\geq 1}$ are respectively the noise covariances of the coefficient dynamics \eqref{TransitionX} and the observation model \eqref{ObservationMod}. The initial covariances of the coefficients and the dictionary are denoted by $P_0$ and $V_0$, respectively.

Intuitively, the model \eqref{PriorC}--\eqref{ObservationMod} is a dimensionality reduction model where the dynamical structure of the learned subspace is explicitly modeled via the transition density \eqref{TransitionX}. This means that inferring $C$ and $(x_k)_{k\geq 0}$ will lead to a probabilistic dimensionality reduction scheme where the dynamical structure in the data will manifest itself in the dynamics of the coefficients $(x_k)_{k\geq 0}$. One main difficulty for applying standard schemes in this case is that we assume $C$ to be an unknown and random matrix, therefore, the (extended) Kalman filter cannot be applied directly for inference. To alleviate this problem, we formulate the prior in \eqref{PriorC} with a Kronecker covariance structure, which enables us to update (conditional on $x_k$) the posterior distribution of $C$ analytically \citep{akyildiz2019dictionary}.

\subsection{The case of the misspecified model}
In the model \eqref{PriorC}--\eqref{ObservationMod}, when the practitioner does not have a good idea of how to set the hyperparameters or when they are misspecified, the resulting inference scheme may perform suboptimally. To remedy this situation and to demonstrate the flexibility of our framework, we additionally propose a robust version of our model by introducing an inverse-gamma-distributed scale variable, $s$, and the model
\begin{align}
p(s) &= \IGPDF(s ; \lambda_0/2, \lambda_0/2) \label{RobPriorS} \\
p(C \given s) &= \mathcal{MN}(C;C_0, I_d, s V_0)), \label{RobPriorC} \\
p(x_0 \given s) &= \NPDF(x_0; \mu_0, s P_0), \label{RobPriorX} \\
p_\theta(x_k \given x_{k-1}, s) &= \NPDF(x_k; f_\theta(x_{k-1}), s Q_{0}), \label{RobTransitionX} \\
p(y_k \given x_k, C, s) &= \NPDF(y_k; C x_k, s R_0), \label{RobObservationMod}
\end{align}
Note that in this model only the initial noise covariances $Q_0$ and $R_0$ need to be specified, in contrast to the model in \eqref{PriorC}--\eqref{ObservationMod}. By marginalizing out the scale variable $s$ in the multivariate normal distributions we obtain multivariate $t$ distributions (e.g., $p(x_0) = \int \NPDF(x_0; \mu_0, s P_0) \IGPDF(s; \lambda_0/2, \lambda_0/2) \md s = \TPDF(x_0 ; \mu_0, P_0, \lambda_0)$, see \citet{Bishop2006}). This technique has previously been used for robust versions of the Kalman filter \citep{giron1994bayesian,basu1994bayesian,roth2017robust,roth2013student,tronarp2019student}. We follow the approach of \citet{tronarp2019student} to update $Q_k$ and $R_k$ at every iteration. These updates to the noise covariances lead to robustness in light of model misspecification, as discussed in \citet{tronarp2019student}.

%More specifically, given the data matrix $Y$, our method factorizes this matrix into $Y \approx C X$ where $X = [x_1,\ldots,x_n]$. The columns of $X$ are assumed to be evolving as a nonlinear Markovian model parameterized by $\theta$. Assuming a Markov model on $(x_k)_{k\geq 1}$ then corresponds to putting temporal and continuous structure over the columns of $X$. The nonlinearity $f_\theta(\cdot)$ is generic and can be general. It can model, for example, periodic or cyclic structures in the data or can be used to control temporal smoothness. It can be also parameterized by complex nonlinear functions to model highly-noisy and structured time-series data.

%%%%%%%%%%%%%%%%%%%%%%%%%%%%
% Inference and Estimation %
%%%%%%%%%%%%%%%%%%%%%%%%%%%%

\section{INFERENCE AND ESTIMATION}\label{sec:InfAndEst}
Here we derive the algorithm for performing sequential inference in the model \eqref{PriorC}--\eqref{ObservationMod}. Inference in the robust model \eqref{RobPriorS}--\eqref{RobObservationMod} is largely analogous, but necessary modifications are given in Sec.~\ref{sec:robust}. We first present the optimal inference recursions and then describe our approximate inference scheme.

\subsection{Optimal sequential inference}
We give the optimal inference recursions for our model when $\theta$ is assumed to be fixed, and thus drop $\theta$ for notational clarity (parameter estimation is revisited in Sec.~\ref{sec:param_est}). To define a recursive one-step ahead procedure we assume that we are given the filters $p(x_{k-1}|y_{1:k-1})$ and $p(c|y_{1:k-1})$ at time $k-1$.

\paragraph{Prediction.} Using the model \eqref{PriorC}--\eqref{ObservationMod} we compute the predictive distribution as
\begin{align}\label{eq:OptimalPredictive}
p(x_k | y_{1:k-1}) = \int p(x_{k-1}|y_{1:k-1}) p(x_k | x_{k-1}) \md x_{k-1}.
\end{align}
We note that given $p(x_{k-1}|y_{1:k-1})$ this step is independent of the dictionary.

\paragraph{Update.} Given this predictive distribution of $x_k$, we can now define the update steps of the method. In contrast to the Kalman filter, we have two quantities to update: $x_k$ and $c$. We first define the incremental marginal likelihood as
\begin{align}
 p(y_k | y_{1:k-1}) &= \iint p(y_k | c, x_k) p(x_k | y_{1:k-1}) \nonumber \\
 &\qquad\qquad p(c|y_{1:k-1}) \md x_k \md c.
\end{align}
Next, we define the optimal recursions for updating the dictionary $C$ and coefficients $(x_k)_{k\geq 1}$.\\
\textit{Dictionary Update:} Given $p(y_k|y_{1:k-1})$, we can first update the dictionary as follows
\begin{align}\label{eq:OptimalDictUpdate}
p(c|y_{1:k}) = p(c|y_{1:k-1}) \frac{p(y_k | c, y_{1:k-1})}{p(y_k | y_{1:k-1})}
\end{align}
where
\begin{align}\label{eq:OptimalDictLik}
p(y_k | c, y_{1:k-1}) = \int p(y_k | c, x_k) p(x_k | y_{1:k-1}) \md x_k.
\end{align}
\textit{Coefficient Update:} We also update the coefficients at time $k$ (independent of the dictionary) as:
\begin{align}\label{eq:OptimalCoeffUpdate}
p(x_k | y_{1:k}) = p(x_k | y_{1:k-1}) \frac{p(y_k | x_k, y_{1:k-1})}{p(y_k | y_{1:k-1})}
\end{align}
where
\begin{align}\label{eq:OptimalCoeffLik}
p(y_k | x_k, y_{1:k-1}) = \int p(y_k | x_k, c) p(c | y_{1:k-1}) \md c.
\end{align}
Unfortunately, these exact recursions are intractable. In the next section, we make these steps tractable by introducing approximations and obtain an efficient and explicit inference algorithm.

\subsection{Approximate sequential inference}
We start by assuming a special structure on the model. First, we note that the matrix-Gaussian prior in \eqref{PriorC} can be written as $p(c) = \NPDF(c; c_0, V_0 \otimes I_d)$. The Kronecker structure in the covariance will be key to obtain an approximate and tractable posterior distribution with the same covariance structure. To describe our inference scheme we assume that we are given ${p}(c|y_{1:{k-1}}) = \NPDF(c;c_{k-1}, V_{k-1} \otimes I_d)$ and ${p}(x_{k-1}|y_{1:k-1}) = \NPDF(x_{k-1}; \mu_{k-1},P_{k-1})$. Departing from these two distributions it is not possible to exactly update $p(c|y_{1:k})$ and ${p}(x_{k}|y_{1:k})$. As we introduce several approximations we will denote approximate densities with the symbol $\tilde{p}(\cdot)$ instead of $p(\cdot)$ to indicate that the distribution is not exact.

\subsubsection{Prediction}
In the prediction step, we need to compute \eqref{eq:OptimalPredictive}. This is analytically tractable for $f_\theta(x) = Ax$. More specifically, when $f_\theta(x) = Ax$, given ${p}(x_{k-1}|y_{1:k-1}) = \NPDF(x_{k-1}; \mu_{k-1},P_{k-1})$, we obtain ${p}(x_k | y_{1:k-1}) = \NPDF(x_k; \bar{\mu}_k, \bar{P}_k)$ where $\bar{\mu}_k = A\mu_{k-1}$ and $\bar{P}_k = A P_{k-1} A^\top + Q_k$. However, if $f_\theta(x)$ is a nonlinear function, no solution exists and the integral in \eqref{eq:OptimalPredictive} is intractable. In this case, we can use the well-known extended Kalman update (EKF). This update is based on the local linearization of the transition model \citep{mclean1962optimal,anderson1979optimal}, which gives  $\tilde{p}(x_k|y_{1:k-1}) = \NPDF(x_k;\bar{\mu}_k,\bar{P}_k)$ with $\bar{\mu}_k = f_\theta(\mu_{k-1})$ and $\bar{P}_k = F_k P_{k-1} F_k^\top + Q_k$ where $F_k = \frac{\partial f_\theta(x)}{\partial x} \big|_{x = \bar{\mu}_{k-1}}$ is a Jacobian matrix associated with $f_\theta$. The unscented Kalman filter of \citet{julier1997new} can also be used in this step when it is not possible to compute $F_k$ or when $f_{\theta}$ is highly nonlinear. However, since $f_{\theta}$ is a modelling choice (analogous to choosing a kernel function in Gaussian Processes) this scenario is unlikely in practice and the EKF will generally suffice.

\subsubsection{Update}
For the update step, we are interested in updating both $x_k$ and $C$. Given the approximate predictive distribution $\tilde{p}(x_k | y_{1:k-1})$, we would like to obtain $\tilde{p}(c|y_{1:k})$ and $\tilde{p}(x_k | y_{1:k})$. We first describe the update rule for the dictionary $C$, then derive the approximate posterior of $x_k$. Given the prediction, update steps of $C$ and $x_k$ are independent to avoid the repeated use of the data point $y_k$.

\paragraph{Dictionary Update.} To obtain $\tilde{p}(c|y_{1:k})$, we note the integral \eqref{eq:OptimalDictLik} can be computed as
\begin{align}\label{eq:ExactDictionaryLik}
{p}(y_k | c, y_{1:k-1}) = \NPDF(y_k; C \bar{\mu}_k, R_k + C \bar{P}_k C^\top).
\end{align}
This closed form is not helpful to us since this distribution plays the role of the likelihood in  \eqref{eq:OptimalDictUpdate}. Since both the mean and the covariance depend on $C$, the update \eqref{eq:OptimalDictUpdate} is intractable. To solve this problem, we first replace $C \bar{P}_k C^\top \approx C_{k-1} \bar{P}_k C_{k-1}^\top$ in \eqref{eq:ExactDictionaryLik}. This enables a tractable update where the likelihood is of the form $\NPDF(y_k; C \bar{\mu}_k, R_k + C_{k-1} \bar{P}_k C_{k-1}^\top)$. Finally, we choose the Gaussian with a constant diagonal covariance that is closest in terms of $\KL$-divergence and obtain (see, e.g., \citet{probabilisticLMS})
\begin{align}\label{eq:ApproximateDictLikelihood}
\tilde{p}(y_k | c, y_{1:k-1}) = \NPDF(y_k; C \bar{\mu}_k, \eta_k \otimes I_d),
\end{align}
where $\eta_k = {\Tr(R_k + C_{k-1} \bar{P}_k C_{k-1}^\top)}/{d}$. With this approximation the update for the new posterior $\tilde{p}(c|y_{1:k})$ can be computed analytically, given formally in the following proposition based on \citet{akyildiz2019dictionary}.
\begin{prop}\label{prop1}
	Given $\tilde{p}(c | y_{1:k-1}) = \NPDF(c; c_{k-1},  V_{k-1} \otimes 
	I_d )$ and the likelihood $\tilde{p}(y_k | c, y_{1:k-1}) = \NPDF(y_k; 
	C \bar{\mu}_k, \eta_k \otimes I_d)$ the approximate posterior 
	distribution is $\tilde{p}(c|y_{1:k}) = \NPDF(c;c_k,V_k\otimes I_d)$, 
	where $c_k = \vect(C_k)$ and the posterior column-covariance matrix 
	$V_k$ is given by
	\begin{align}
		\label{UpdateV_k}
		V_k = V_{k-1} - \frac{
			V_{k-1} \bar{\mu}_k \bar{\mu}_k^\top V_{k-1}
		}{
			\bar{\mu}_k^\top V_{k-1} \bar{\mu}_k + \eta_k
		} \qquad \text{ for } \quad k \ge 1,
	\end{align}
	%with $V_0$ given by the prior pdf in \eqref{PriorC}.
	and the 
	posterior mean $C_k$ of the dictionary $C$ can be obtained in 
	matrix-form as
	\begin{align}
		\label{FullPosteriorMean}
		C_k = C_{k-1} + \frac{(y_k - C_{k-1} \bar{\mu}_k) 
			\bar{\mu}_k^\top V_{k-1}^{\top}}{\bar{\mu}_k^\top V_{k-1} 
			\bar{\mu}_k + \eta_k} \quad \text{ for } k \ge 1.
	\end{align}
\end{prop}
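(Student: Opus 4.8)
The plan is to recognize that, once the likelihood has been replaced by the Gaussian \eqref{eq:ApproximateDictLikelihood} with a constant diagonal covariance, the dictionary update \eqref{eq:OptimalDictUpdate} is nothing but an ordinary linear--Gaussian (Kalman) conjugate update in the vectorized variable $c = \vect(C)$; the only real content is then to push the standard update formulas through the Kronecker algebra and de-vectorize to recover the matrix-form expressions \eqref{UpdateV_k} and \eqref{FullPosteriorMean}.

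First I would rewrite the mean of \eqref{eq:ApproximateDictLikelihood} as a linear function of $c$: by the identity $\vect(AXB) = (B^\top \otimes A)\vect(X)$ we have $C\bar{\mu}_k = \vect(I_d\, C\, \bar{\mu}_k) = (\bar{\mu}_k^\top \otimes I_d)\, c$, so the approximate likelihood reads $\tilde{p}(y_k \mid c, y_{1:k-1}) = \NPDF(y_k;\, H_k c,\, \eta_k I_d)$ with the known observation matrix $H_k := \bar{\mu}_k^\top \otimes I_d \in \bR^{d\times dr}$ (note $\eta_k$ is scalar, so $\eta_k\otimes I_d = \eta_k I_d$). Combining this with the prior $\tilde{p}(c\mid y_{1:k-1}) = \NPDF(c;\, c_{k-1},\, V_{k-1}\otimes I_d)$ and applying the textbook Gaussian conditioning formula gives a Gaussian posterior with innovation covariance $S_k = H_k(V_{k-1}\otimes I_d)H_k^\top + \eta_k I_d$, gain $K_k = (V_{k-1}\otimes I_d)H_k^\top S_k^{-1}$, mean $c_k = c_{k-1} + K_k(y_k - H_k c_{k-1})$, and covariance $(V_{k-1}\otimes I_d) - K_k H_k (V_{k-1}\otimes I_d)$.

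Next I would simplify each term with the mixed-product property of the Kronecker product. Using $H_k = \bar{\mu}_k^\top \otimes I_d$, the innovation covariance collapses to a scalar multiple of the identity, $S_k = (\bar{\mu}_k^\top V_{k-1}\bar{\mu}_k + \eta_k)I_d$, so $S_k^{-1}$ is trivial; the gain becomes $K_k = (\bar{\mu}_k^\top V_{k-1}\bar{\mu}_k + \eta_k)^{-1}\,(V_{k-1}\bar{\mu}_k)\otimes I_d$; and the posterior covariance simplifies to $\big(V_{k-1} - (\bar{\mu}_k^\top V_{k-1}\bar{\mu}_k + \eta_k)^{-1}\, V_{k-1}\bar{\mu}_k\bar{\mu}_k^\top V_{k-1}\big)\otimes I_d$, i.e., $V_k \otimes I_d$ with $V_k$ exactly as in \eqref{UpdateV_k} --- in particular the Kronecker factorization is preserved, which is the structural part of the claim. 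For the mean, $H_k c_{k-1} = \vect(C_{k-1}\bar{\mu}_k) = C_{k-1}\bar{\mu}_k$ and the identity $\big[(V_{k-1}\bar{\mu}_k)\otimes I_d\big] v = \vect\big(v\,\bar{\mu}_k^\top V_{k-1}^\top\big)$ for $v\in\bR^d$ turn $c_k = c_{k-1} + K_k(y_k - C_{k-1}\bar{\mu}_k)$ into the vectorization of \eqref{FullPosteriorMean}. An equivalent route is the information form: the posterior precision is $\big(V_{k-1}^{-1} + \eta_k^{-1}\bar{\mu}_k\bar{\mu}_k^\top\big)\otimes I_d$, and Sherman--Morrison applied to the $r\times r$ factor reproduces the same $V_k$.

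I do not expect a genuine obstacle: the statement is a conjugacy computation, so the ``hard part'' is only careful bookkeeping --- keeping track of the $dr$-dimensional vectorized quantities, applying the $\vect$ and Kronecker identities in the correct order, and verifying at each step that the covariance keeps the form $(\cdot)\otimes I_d$ so that reverting to a matrix-variate Gaussian is legitimate. The one conceptual point worth stating explicitly is that $\bar{\mu}_k$ and $\eta_k$ enter as known constants at this stage (they are produced by the prediction step and by the previous dictionary mean $C_{k-1}$), so $H_k$ is a bona fide deterministic linear map and no approximation beyond \eqref{eq:ApproximateDictLikelihood} is introduced.
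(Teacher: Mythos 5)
Your proposal is correct and follows essentially the same route as the paper's proof: rewriting the likelihood as $\NPDF(y_k; H_k c, G_k)$ with $H_k = \bar{\mu}_k^\top \otimes I_d$ and $G_k = \eta_k \otimes I_d$, applying the standard Gaussian conjugate update, and then using the Kronecker mixed-product, inversion, and $\vect$ identities to collapse the innovation covariance and de-vectorize the mean into \eqref{FullPosteriorMean} while preserving the $V_k \otimes I_d$ structure in \eqref{UpdateV_k}. The Sherman--Morrison/information-form remark is a valid alternative but not needed.
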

\begin{proof}
	See Supp.~\ref{sec:appProofProp1}.
\end{proof}
We note the main gain of this result is that we obtain matrix-variate update 
rules for the sufficient statistics of the posterior distribution. This is key 
to an efficient implementation of the method.

\paragraph{Coefficient Update.} To update the posterior density of coefficients, we derive the approximation of $p(y_k | y_{1:k-1},x_k)$ by integrating out $c$, as in \eqref{eq:OptimalCoeffLik}. First, we have the following result.
\begin{prop}\label{propXLik} Given $p(y_k | c, x_k)$ as in 
	\eqref{ObservationMod} and $p(c|y_{1:k-1}) = \NPDF(c; c_{k-1}, V_{k-1} 
	\otimes I_d)$, we obtain
\begin{align}\label{eq:IncrementalMLforxk}
p(y_k | y_{1:k-1},x_k) = \NPDF(y_k; C_{k-1} x_k, R_k + x_k^\top V_{k-1} x_k \otimes I_d).
\end{align}
\end{prop}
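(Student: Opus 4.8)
The plan is to evaluate the marginalization integral \eqref{eq:OptimalCoeffLik}, namely $p(y_k \given y_{1:k-1}, x_k) = \int p(y_k \given x_k, c)\, p(c \given y_{1:k-1})\, \md c$, treating $x_k$ as a fixed conditioning variable and recognizing the integrand as a linear--Gaussian model in $c$. The first step is to express the observation mean $C x_k$ as a linear function of $c = \vect(C)$: applying the identity $\vect(A X B) = (B^\top \otimes A)\vect(X)$ with $A = I_d$ and $B = x_k$ gives $C x_k = (x_k^\top \otimes I_d)\, c$. Setting $H_k := x_k^\top \otimes I_d \in \bR^{d \times dr}$, the observation model \eqref{ObservationMod} reads $y_k = H_k c + v_k$ with $v_k \sim \NPDF(0, R_k)$ independent of $c$, while the assumption of the proposition gives $c \sim \NPDF(c_{k-1}, V_{k-1}\otimes I_d)$.

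The second step is to apply the standard Gaussian marginalization rule: if $c \sim \NPDF(m, P)$ and $y = H c + v$ with $v \sim \NPDF(0, R)$ independent of $c$, then the marginal of $y$ is $\NPDF(H m,\, H P H^\top + R)$. Here this yields $p(y_k \given y_{1:k-1}, x_k) = \NPDF\!\big(y_k;\, H_k c_{k-1},\, H_k (V_{k-1}\otimes I_d) H_k^\top + R_k\big)$.

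The third step simplifies the mean and covariance. For the mean, reversing the same vectorization identity gives $H_k c_{k-1} = (x_k^\top \otimes I_d)\vect(C_{k-1}) = C_{k-1} x_k$. For the covariance, the Kronecker mixed-product property $(A\otimes B)(C\otimes D) = (AC)\otimes(BD)$ gives $H_k (V_{k-1}\otimes I_d) H_k^\top = (x_k^\top \otimes I_d)(V_{k-1}\otimes I_d)(x_k \otimes I_d) = (x_k^\top V_{k-1} x_k)\otimes I_d$, where $x_k^\top V_{k-1} x_k$ is a scalar and $\otimes I_d$ merely scales the $d\times d$ identity. Substituting both expressions produces the claimed density $\NPDF(y_k;\, C_{k-1} x_k,\, R_k + x_k^\top V_{k-1} x_k \otimes I_d)$.

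I do not expect a genuine obstacle: the argument is bookkeeping with the $\vect$ operator and elementary Kronecker identities, exactly in the spirit of the matrix-variate/Kronecker relations recalled in the Notation section. The only point needing care is keeping the two roles distinct — $x_k$ is held fixed (not integrated) while $c$ is the Gaussian variable being marginalized — and invoking $\vect(AXB)=(B^\top\otimes A)\vect(X)$ correctly in both directions. As a sanity check one can also derive the result purely in matrix form, since $C \given y_{1:k-1} \sim \mathcal{MN}(C_{k-1}, I_d, V_{k-1})$ implies $\bE[C x_k \given y_{1:k-1}] = C_{k-1} x_k$ and $\mathrm{Cov}[C x_k \given y_{1:k-1}] = (x_k^\top V_{k-1} x_k) I_d$, to which the independent observation noise adds $R_k$.
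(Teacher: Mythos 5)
Your proposal is correct and follows essentially the same route as the paper's proof: rewrite $Cx_k=(x_k^\top\otimes I_d)c$ via the vectorization identity, apply the standard linear--Gaussian marginalization over $c$, and collapse the covariance with the Kronecker mixed-product property. The extra matrix-variate sanity check at the end is a nice touch but adds nothing the paper does not already cover.
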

\begin{proof}
See Supp.~\ref{sec:appProofProp2}.
\end{proof}
We note that in practice this quantity will be approximate as, e.g., $\tilde{p}(c|y_{1:k-1})$ (and other quantities) will be approximate. However, the likelihood in \eqref{eq:IncrementalMLforxk} with its current form is not amenable to exact inference in \eqref{eq:OptimalCoeffUpdate}, as it contains $x_k$ in both mean and covariance. Therefore, we approximate \eqref{eq:IncrementalMLforxk} by
\begin{align}\label{eq:ApproxCoeffLik}
\tilde{p}(y_k | y_{1:k-1},x_k) = \NPDF(y_k; C_{k-1} x_k, \bar{R}_k),
\end{align}
where $\bar{R}_k = R_k + \bar{\mu}_k^\top V_{k-1}  \bar{\mu}_k \otimes I_d$. With this likelihood, we can obtain the approximate posterior using \eqref{eq:OptimalCoeffUpdate} by an application of the Kalman update \citep{anderson1979optimal}, as $\tilde{p}(x_k | y_{1:k}) = \NPDF(x_k; \mu_k, P_k)$
with
\begin{align}
\mu_k &= \bar{\mu}_k + \bar{P}_k C_{k-1}^\top (C_{k-1} \bar{P}_k C_{k-1}^\top + \bar{R}_k)^{-1} (y_k - C_{k-1} \bar{\mu}_k), \label{eq:coefUpdateMean} \\
P_k &= \bar{P}_k - \bar{P}_k C_{k-1}^\top (C_{k-1} \bar{P}_k C_{k-1}^\top + \bar{R}_k)^{-1} C_{k-1} \bar{P}_k. \label{eq:coefUpdateCov}
\end{align}
Thus we see that the update equations for both the dictionary and the coefficients can be easily implemented by straightforward matrix operations.
\begin{remark}
When $\bar{R}_k$ is diagonal the Woodbury matrix identity \citep{woodbury1950inverting} can be used to accelerate the computation of $(C_{k-1} \bar{P}_k C_{k-1}^{\top} + \bar{R}_k)^{-1}$. We apply this technique in our experiments in Section~\ref{sec:ExperimentPollution}.
\end{remark}

\subsubsection{Inference in the robust model}
\label{sec:robust}
For the robust model in \eqref{RobPriorS}--\eqref{RobObservationMod} inference and estimation proceeds analogously. We provide the full derivation in Supp.~\ref{app:Robust}. As a consequence of the multivariate $t$ distribution the degrees of freedom in the update equations increase by $d$ at every iteration, which we write as $\lambda_k = \lambda_{k-1} + d$. Let $\Delta_{1,k}^{2} = (y_k - C_{k-1}\bar{\mu}_k)^{\top} (C_{k-1} \bar{P}_k C_{k-1}^{\top} + \bar{R}_k)^{-1}(y_k - C_{k-1}\bar{\mu}_k)$ and $\omega_k = (\lambda_{k-1} + \Delta_{1,k}^2)/(\lambda_{k-1} + d)$. Then the reparameterization of the scale variable introduced in \citet{tronarp2019student} results in $s_0 = s$ and $s_k = \omega_k^{-1} s_{k-1}$, as well as the updates $Q_k = \omega_k Q_{k-1}$ and $R_k = \omega_k R_{k-1}$ for the noise covariances. While the mean updates for the coefficients and the dictionary remain unchanged in the robust model, the update of the coefficient covariance $P_k$ and the dictionary column-covariance $V_k$ are affected. The Student's $t$ update for $P_k$ results in multiplication of the right-hand side of \eqref{eq:coefUpdateCov} by $\omega_k$. Now let $\bar{\rho}_k = \bar{\mu}_k^{\top} V_{k-1} \bar{\mu}_k + \eta_k$ and $\Delta_{2,k}^{2} = \|y_k - C_{k-1}\bar{\mu}_k\|^2/\bar{\rho}_k$. Analogously, the right-hand side of \eqref{UpdateV_k} is multiplied by a factor $\varphi_k = {(\lambda_{k-1} + \Delta_{2,k}^2)}/{(\lambda_{k-1} + d)}$. See Supp.~\ref{app:Robust} for full details.

\subsubsection{Parameter estimation}\label{sec:param_est}

To estimate the parameters of $f_{\theta}$ in \eqref{TransitionX}, we need to solve
\begin{align}\label{eq:ParamEstOpt}
\theta^\star \in \argmax_{\theta\in\Theta}  \log p_\theta(y_{1:n}),
\end{align}
using gradient-based schemes \citep{kantas2015particle}. We first present an offline gradient ascent scheme for when the number of observations is relatively small and then introduce a recursive variant that can be used in a streaming setting. %when the number of data points is large.
%Depending on the model (particularly on $f_\theta$), the problem \eqref{eq:ParamEstOpt} can be non-convex.

\paragraph{Iterative estimation.} When the number of data points is limited, it is possible to employ an iterative procedure using multiple passes over data by implementing
\begin{align}\label{eq:offlineSGD}
\theta_i = \theta_{i-1} + \gamma \nabla \log \tilde{p}_\theta(y_{1:n}) \Big|_{\theta = \theta_{i-1}},
\end{align}
at the $i$'th iteration. We refer to this approach as \emph{iterative} PSMF. Since computing $\nabla \log p_\theta(y_{1:n})$ is not possible due to the intractability, we propose to use an approximation $\nabla \log \tilde{p}_\theta(y_{1:n}) = \sum_{k=1}^n \nabla \log \tilde{p}_\theta(y_k | y_{1:k-1})$ that can be computed during forward filtering and removes the need to store all gradients. We remark that it is possible to obtain two approximations of the incremental marginal likelihood $p_\theta(y_k | y_{1:k-1})$ by either integrating out $c$ in \eqref{eq:ApproximateDictLikelihood} or $x_k$ in \eqref{eq:ApproxCoeffLik}. However, the resulting quantities are closely related and we choose the former path for computational reasons. We refer to Sec.~\ref{sec:app:Gradients} for the derivation of the approximate log-marginal likelihood $\log \tilde{p}_\theta(y_k | y_{1:k-1})$.

\paragraph{Recursive estimation.} For long sequences, it is inefficient to perform \eqref{eq:offlineSGD}. Instead, the parameter can be updated online during filtering by fixing $\theta = \theta_{k-1}$ and updating
\begin{align}\label{eq:onlineSGD}
\theta_k = \theta_{k-1} + \gamma \nabla \log \tilde{p}_\theta(y_k | y_{1:k-1})\Big|_{\theta = \theta_{k-1}}.
\end{align}
We call this approach \emph{recursive} PSMF. This is an approximate recursive MLE procedure for SSMs \citep{kantas2015particle}. This procedure has guarantees for finite-state space HMMs, but its convergence for general SSMs is an open problem \citep{kantas2015particle}. The description of iterative and recursive PSMF is given in Algorithm~\ref{algDF}.

\begin{remark} The gradient steps in \eqref{eq:offlineSGD} and \eqref{eq:onlineSGD} can be replaced by modern optimizers to improve convergence, such as Adam \citep{kingma2015adam}. We take advantage of this in Section~\ref{sec:ExperimentSynthetic} below.
\end{remark}

\begin{algorithm}[!tb]
	\small
\begin{algorithmic}[1]
\caption{Iterative and recursive PSMF}\label{algDF}
\State Initialize $\gamma$, $\theta_0$, $C_0$, $V_0$, $\mu_0$, $P_0$, 
$(Q)_{k\geq 1}$, $(R)_{k\geq 1}$.
\For{$i \geq 1$} \Comment{\textcolor{gray}{iterative version}}
\State $C_0 = C_n$, $\mu_0 = \mu_n$, $P_0 = P_n$, $V_0 = V_n$.
\For{$1 \leq k \leq n$}
\State Compute predictive mean of $x_k$:

\hskip\algorithmicindent\hskip\algorithmicindent $\bar{\mu}_k = f_{\theta_{i-1}}(\mu_{k-1})$ or $\bar{\mu}_k = f_{\theta_{k-1}}(\mu_{k-1})$
\State Compute predictive covariance of $x_k$:

\hskip\algorithmicindent $\bar{P}_k = F_k P_{k-1} F_k^\top + Q_k$,  with  $F_k = \frac{\partial f(x)}{\partial x} \big|_{x = \bar{\mu}_{k-1}}$
\State Update dictionary mean $C_k$ using (\ref{FullPosteriorMean})
\State Update dictionary covariance $V_k$ with (\ref{UpdateV_k})
%Mean and covariance updates of the dictionary
%\begin{align*}
%C_k = C_{k-1} + \frac{(y_k - C_{k-1} \bar{\mu}_k) \bar{\mu}_k^\top V_{k-1}^{\top}}{\bar{\mu}_k^\top V_{k-1} \bar{\mu}_k + \eta_k} \quad \text{and} \quad V_k =
%	V_{k-1} - \frac{
%		V_{k-1} \bar{\mu}_k \bar{\mu}_k^\top V_{k-1}
%	}{
%		\bar{\mu}_k^\top V_{k-1} \bar{\mu}_k + \eta_k
%	} 
%\end{align*}
\State Update coefficient mean $\mu_k$ using (\ref{eq:coefUpdateMean}) 
\State Update coefficient covariance $P_k$ with (\ref{eq:coefUpdateCov})
%\State Mean and covariance updates of $x_k$
%\begin{align*}
%\mu_k = \bar{\mu}_k + \bar{P}_k C_{k-1}^\top S_k^{-1} (y_k - C_{k-1} 
%\bar{\mu}_k), \quad \text{and} \quad P_k = \bar{P}_k - \bar{P}_k C_{k-1}^\top 
%S_k^{-1} C_{k-1} \bar{P}_k,
%\end{align*}
%\hskip\algorithmicindent\hskip\algorithmicindent where $S_k = C_{k-1} 
%\bar{P}_k C_{k-1}^\top + \bar{R}_k$ with $\bar{R}_k = R_k + \bar{\mu}_k^{\top} V_{k-1} \bar{\mu}_k \otimes I_d$.
\State Update parameters with (\ref{eq:onlineSGD}) \Comment{\textcolor{gray}{recursive version}}
%\State Parameter update: \, $\theta_k = \theta_{k-1} + \gamma \nabla \log \tilde{p}_{\theta}(y_k | y_{1:k-1}) \big|_{\theta = \theta_{k-1}}.$ \Comment{\textcolor{gray}{recursive version}}
\EndFor
\State Update parameters with (\ref{eq:offlineSGD}) \Comment{\textcolor{gray}{iterative version}}
%Parameter update:\, $\theta_i = \theta_{i-1} + \gamma \sum_{k=1}^n \nabla \log \tilde{p}_{\theta}(y_k | y_{1:k-1}) \big|_{\theta = \theta_{i-1}}.$ 
\EndFor
\end{algorithmic}
\end{algorithm}

\subsubsection{Approximating the marginal likelihood}\label{sec:app:Gradients}
Consider the likelihood \eqref{eq:ApproximateDictLikelihood} which equals $\tilde{p}_\theta(y_k|y_{1:k-1},c) = \NPDF(y_k; C f_\theta(\mu_{k-1}), \eta_k \otimes I_d)$. Given $\tilde{p}(c | y_{1:k-1}) = \NPDF(c; c_{k-1}, V_{k-1} \otimes I_d)$, the negative log-likelihood is given by (see Supp.~\ref{sec:app:negLogLik})
\begin{align}
-\log \tilde{p}_\theta(y_k | y_{1:k-1}) &\stackrel{c}{=} \frac{d}{2} {\log} \left(\|f_\theta(\mu_{k-1})\|^2_{V_{k-1}} + \eta_k\right) \nonumber \\
&+ \frac{1}{2} \frac{\|y_k - C_{k-1} f_\theta(\mu_{k-1})\|^2}{\eta_k + \|f_\theta(\mu_{k-1})\|_{V_{k-1}}^2}\label{eq:marLogLik}
\end{align}
where $\stackrel{c}{=}$ denotes equality up to some constants that are independent of $\theta$, hence irrelevant for the optimization. Eq.~\eqref{eq:marLogLik} can be seen as an optimization objective that arises from our model. We can compute the gradients of \eqref{eq:marLogLik} using automatic differentiation for generic coefficient dynamics $f_\theta$.

%%%%%%%%%%%%%%%
% Experiments %
%%%%%%%%%%%%%%%
\section{EXPERIMENTS}\label{sec:Experiments}

%We empirically evaluate the proposed methods to showcase robustness to outliers, the benefits of the subspace representations, and their ability for multivariate data imputation (handling missing data in the method is detailed in Supp~\ref{app:Missing}). Additionally, a discussion on convergence is given in Supp.~\ref{app:Convergence}.

\begin{figure}[t]
    \centering
    \includegraphics[width=.45\textwidth,height=5cm]{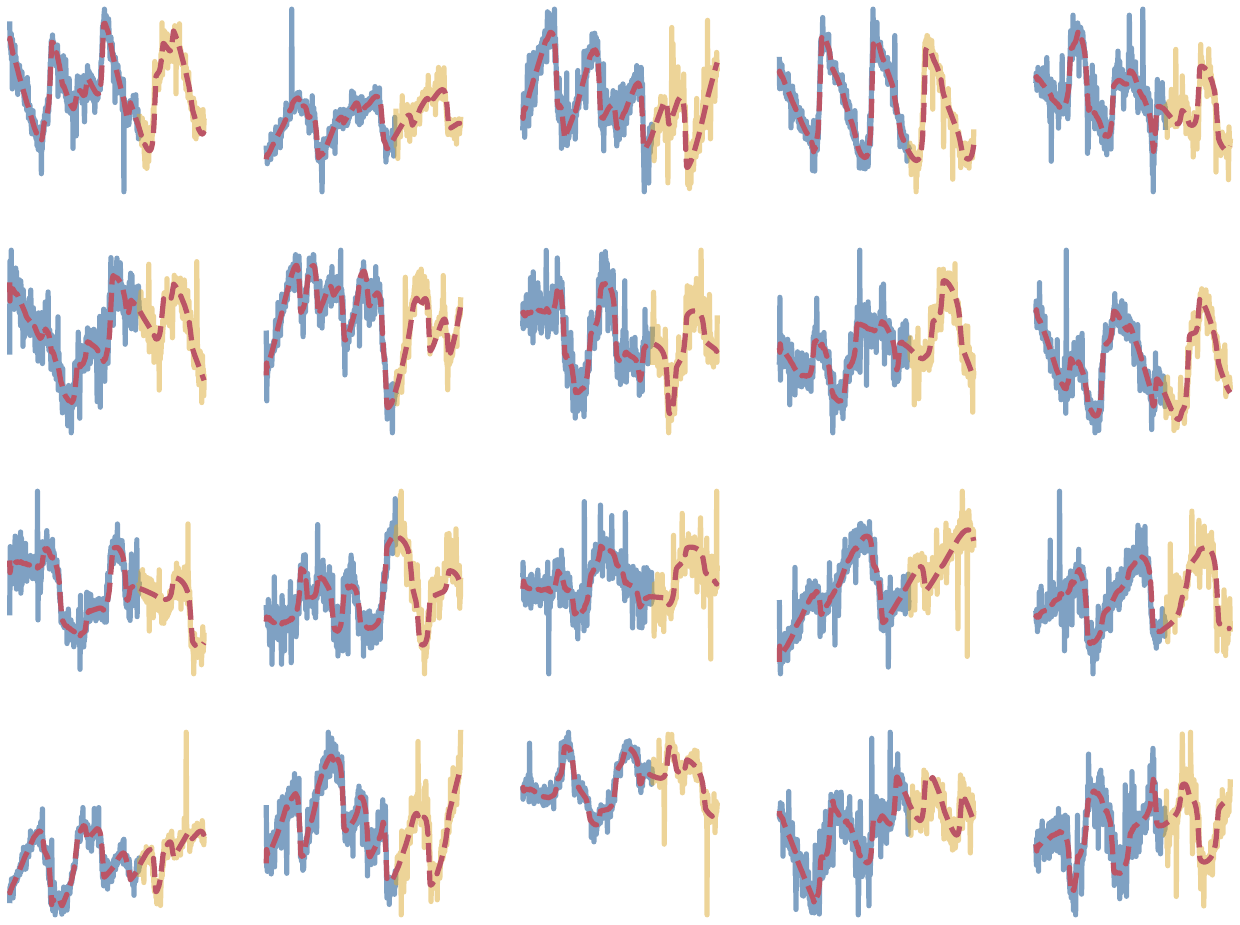}
    \caption{Fitting rPSMF on synthetic data with $t$-distributed noise. Observed time series (\textcolor{SynBlue}{blue}) with unobserved future data (\textcolor{SynYellow}{yellow}) and the reconstruction from the model (\textcolor{SynRed}{red}). \label{fig:synthetic}}
\end{figure}

We evaluate our method in several experiments. In the first two experiments, we show how PSMF can simultaneously learn the dictionary and the parameters of a nonlinear subspace model on synthetic and real data. The third experiment illustrates how our method can be beneficial for change point detection. Finally, our fourth experiment highlights how our method outperforms other MF methods for missing value imputation (modifications to handle missing data in our method are given in Supp.~\ref{app:Missing}). Additionally, in Supp.~\ref{app:Convergence} and Supp.~\ref{app:recursivePSMF} we present experiments on the convergence of our method and recursive parameter estimation, respectively. Code to reproduce our experiments is available in an online repository.\footnote{See: \url{https://github.com/alan-turing-institute/rPSMF}.}

\subsection{A synthetic nonlinear periodic subspace \label{sec:ExperimentSynthetic}}

To demonstrate the ability of the algorithm to learn a dictionary and a structured subspace jointly, we choose $x_0 = \mu_0$ and $P_0 = 0$ and use $x_k = f_\theta(x_{k-1}) = \cos(2\pi\theta k + x_{k-1})$, where $\theta \in \bR_+^r$ and $Q_k = 0$ for all $k\geq 1$. This defines a deterministic subspace with highly periodic structure. We choose $d = 20$ and $r = 6$ and generate the data from the model with $\theta^\star = 10^{-3} \cdot [1,2,3,4,5,6]$. We explore both Gaussian and $t$-distributed measurement noise (the latter using 3 degrees of freedom) and both PMSF and rPSMF. We initialize $C_0$ randomly and draw $\theta_0$ from a uniform distribution on $[0,0.1]^r$. We set $V_0 = v_0 \otimes I_r$ with $v_0 = 0.1$ and use $\lambda_0 = 1.8$ for rPSMF. We furthermore use iterative parameter estimation using the Adam optimizer \citep{kingma2015adam} with standard parameterization, and re-initialize $V_0$, $R_0$, and $Q_0$ at every (outer) iteration (see Supp.~\ref{app:Experiment1} for details). The generated data can be seen in Fig.~\ref{fig:synthetic}. The task is thus to identify the correct subspace structure by estimating $\theta$ as well as to learn the dictionary matrix $C$. 
%We note that the specific structure we assume simplifies the computations in Algorithm~\ref{algDF}. In particular, due to the deterministic structure we do not have to perform Step 6 and the covariance update in Step 8 of the algorithm. 

Fig.~\ref{fig:synthetic} shows a run with $t$-distributed noise fitted by rPSMF. We can see that even though the data exhibits clear outliers the model successfully learns both the underlying generative model and its parameters. Expanded results for PSMF are available in  Supp.~\ref{app:Experiment1}. 

\subsection{Forecasting real-world data using a nonlinear subspace \label{sec:ExperimentBeijing}}

To illustrate the advantage of specifying an appropriate subspace model, we explore weather-related features in a dataset on air quality in Beijing \citep{liang2015assessing}, obtained from the UCI repository \citep{bache2013uci}. To simplify the problem the dataset is sampled at every 100 steps, resulting in $n = 439$ observations and $d = 3$ variables (dew point, temperature, and atmospheric pressure). We compare PSMF using a random walk subspace model, $x_k = f(x_{k-1}) = x_{k-1}$, against a periodic subspace model $x_k = f_{\theta}(x_{k-1}) = \theta_1 \sin(2\pi \theta_2 k + \theta_3 x_{k-1}) + \theta_4 \cos(2\pi \theta_5 k + \theta_6 x_{k-1})$. In both settings we use $r = 1$, run iterative PSMF with 100 iterations, and withhold 20\% of the data for prediction. Fig.~\ref{fig:beijing} illustrates the benefit of an appropriate subspace model on forecasting performance and confirms that PSMF can recover nonlinear subspace dynamics in real-world datasets.

\begin{figure}[t]
\centering
\captionsetup[subfigure]{justification=centering}%
\begin{subfigure}[b]{.45\textwidth}
    \includegraphics[width=\textwidth]{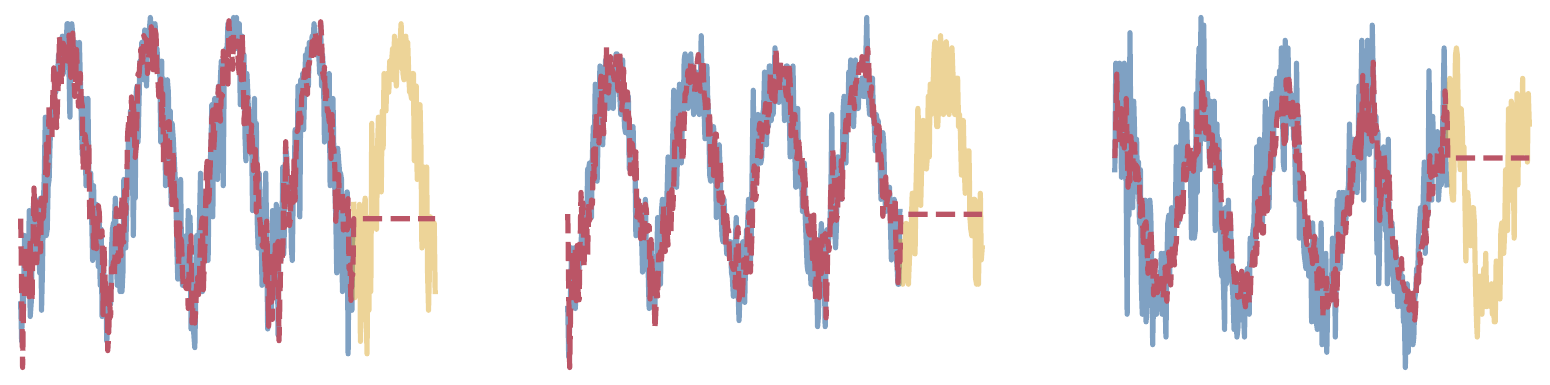}
    \caption{Random walk subspace model. \label{fig:beijing_rw}}
\end{subfigure}
\begin{subfigure}[b]{.45\textwidth}
    \includegraphics[width=\textwidth]{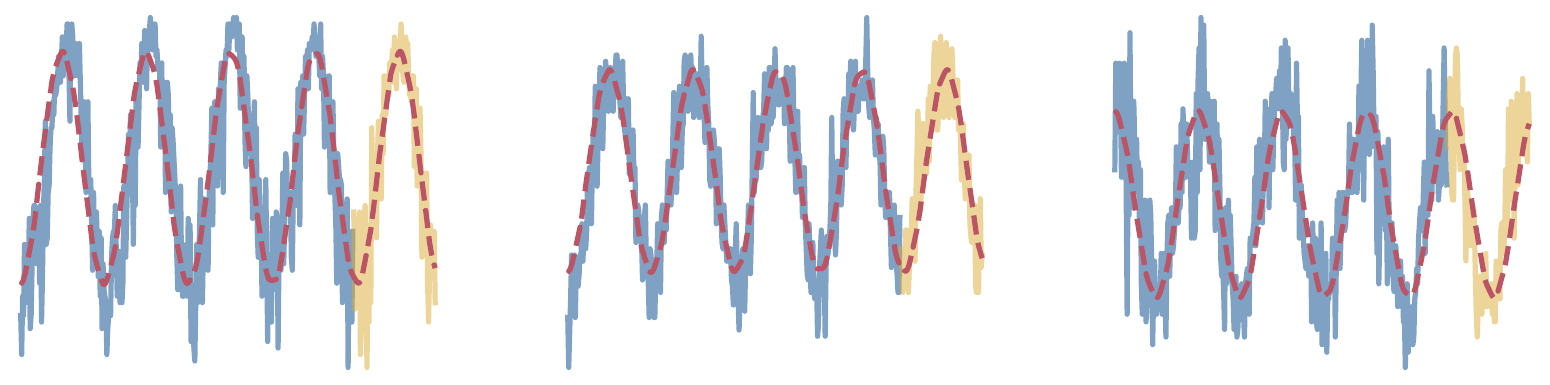}
    \caption{Periodic subspace model. \label{fig:beijing_periodic}}
\end{subfigure}
\caption{Comparison of random walk and periodic subspace models on a time series of weather measurements in Beijing. This shows that with the appropriate subspace model, PSMF correctly identifies the nonlinear dynamics of the data and accurately extrapolates into the future. Colors as in Fig.~\ref{fig:synthetic}. \label{fig:beijing}}
\end{figure}

\subsection{Learning representations for robust multivariate changepoint detection \label{sec:ExperimentChangepoint}}

\begin{table}[tb]
% \centering
		\caption{Detection accuracy of a changepoint within a window of length $30$ using data and GP features, against the degrees of freedom of the $t$-distributed noise on $\pm 5\%$ of measurements. Averaged over $1000$ different synthetic datasets with $r = 10$.}\label{table:GPcpd} 
		\begin{subtable}{.9\textwidth}
		\footnotesize
% 		\begin{center}
		{\begin{tabular}{@{\extracolsep{4pt}}@{\kern\tabcolsep}lccccc}
				%{llllllrrrrrrrrrr}
				\toprule
				&\multicolumn{5}{c}{Degrees of freedom of $t$-contamination}\\
				\cmidrule(lr){2-6}
				& 1.5   & 1.6     & 1.7    & 1.8   & 1.9  \\
				\cmidrule(lr){2-6}
				\multirow{1}{*}{PELT-PSMF}      	&   \textbf{85\%}            &  \textbf{89\%} & \textbf{92\%}      			& \textbf{94\%}           & \textbf{95\%}       \\
				\multirow{1}{*}{PELT-Data}   & 76\%         & 81\%  & 83\%  & 85\% & 85\%
				      \\
				\multirow{1}{*}{MBOCPD}   & 54\%         & 58\%  & 61\%  & 69\% & 72\%
				      \\
				\bottomrule
		\end{tabular}}%  
% 		\end{center}
	\end{subtable}%
\end{table}

% TABLES FOR MISSING VALUE IMPUTATION
\begin{table*}[tb]
	\centering
	\caption{Imputation error and runtime on several datasets using 30\% missing values, averaged over 100 random repetitions. An asterisk marks offline methods.}\label{table:Predictive}
	\small
    \centering
    %% DO NOT EDIT - AUTOMATICALLY GENERATED FROM RESULTS!
%% This table requires booktabs, amsmath, and multirow!
%% Table for missing percentage 30
\begin{tabular}{lrrrrrrrrrr}
\toprule
 & \multicolumn{5}{c}{Imputation RMSE} & \multicolumn{5}{c}{Runtime (s)} \\\cmidrule(lr){2-6} \cmidrule(lr){7-11}
 & NO$_2$ & PM10 & PM25 & S\&P500 & Gas & NO$_2$ & PM10 & PM25 & S\&P500 & Gas \\ \cmidrule(lr){2-6} \cmidrule(lr){7-11}
PSMF & $\underset{{\scriptscriptstyle (0.13)}}{\textbf{5.72}}$ & $\underset{{\scriptscriptstyle (0.31)}}{\textbf{7.44}}$ & $\underset{{\scriptscriptstyle (0.23)}}{3.55}$ & $\underset{{\scriptscriptstyle \;\;(2.42)}}{11.56}$ & $\underset{{\scriptscriptstyle (1.07)}}{\textbf{6.16}}$ & 2.76 & 2.61 & 1.91 & 9.37 & 96.75\\
rPSMF & $\underset{{\scriptscriptstyle (0.22)}}{5.73}$ & $\underset{{\scriptscriptstyle (0.45)}}{7.54}$ & $\underset{{\scriptscriptstyle (0.21)}}{\textbf{3.50}}$ & $\underset{{\scriptscriptstyle \;\;(1.67)}}{\textbf{10.24}}$ & $\underset{{\scriptscriptstyle (1.51)}}{6.18}$ & 2.93 & 2.03 & 2.02 & 13.06 & 111.89\\
MLE-SMF & $\underset{{\scriptscriptstyle \;\;(0.58)}}{11.17}$ & $\underset{{\scriptscriptstyle (0.31)}}{9.50}$ & $\underset{{\scriptscriptstyle (0.36)}}{4.90}$ & $\underset{{\scriptscriptstyle \;\;(0.83)}}{30.20}$ & $\underset{{\scriptscriptstyle \;\;\;(19.95)}}{111.16}$ & 2.54 & 2.38 & 1.69 & 9.72 & 87.22\\
TMF & $\underset{{\scriptscriptstyle (0.14)}}{7.73}$ & $\underset{{\scriptscriptstyle (0.22)}}{8.08}$ & $\underset{{\scriptscriptstyle (0.31)}}{4.65}$ & $\underset{{\scriptscriptstyle \;\;(0.79)}}{34.90}$ & $\underset{{\scriptscriptstyle \;\;(8.64)}}{74.80}$ & 1.03 & 0.97 & 0.65 & 4.19 & 34.23\\
PMF* & $\underset{{\scriptscriptstyle \;\;(0.06)}}{10.51}$ & $\underset{{\scriptscriptstyle \;\;(0.18)}}{10.49}$ & $\underset{{\scriptscriptstyle (0.18)}}{4.05}$ & $\underset{{\scriptscriptstyle \;\;(1.43)}}{40.69}$ & $\underset{{\scriptscriptstyle \;\;(0.05)}}{23.77}$ & 1.96 & 1.72 & 0.61 & 2.79 & 28.35\\
BPMF* & $\underset{{\scriptscriptstyle (0.20)}}{9.22}$ & $\underset{{\scriptscriptstyle (0.20)}}{8.50}$ & $\underset{{\scriptscriptstyle (0.18)}}{3.68}$ & $\underset{{\scriptscriptstyle \;\;(0.65)}}{27.64}$ & $\underset{{\scriptscriptstyle \;\;(0.28)}}{18.31}$ & 2.89 & 2.71 & 1.61 & 3.68 & 91.30\\
\bottomrule
\end{tabular}
\end{table*}

\begin{table}[tb]
    \centering
	\caption{Average coverage proportion of the missing data by the $2\sigma$ uncertainty bars of the posterior predictive estimates, averaged over 100 repetitions. \label{table:coverage}}
    \footnotesize
    \centering
    %% DO NOT EDIT - AUTOMATICALLY GENERATED FROM RESULTS!
%% This table requires booktabs and multirow!
%% Table for missing percentage 30
\begin{tabular}{lccccc}
\toprule
 & NO$_2$ & PM10 & PM25 & S\&P500 & Gas \\
\cmidrule(lr){2-6}
PSMF & 0.76 & 0.76 & \textbf{0.92} & 0.83 & \textbf{0.89} \\
rPSMF & \textbf{0.85} & \textbf{0.89} & 0.87 & \textbf{0.83} & 0.86 \\
MLE-SMF & 0.43 & 0.56 & 0.80 & 0.48 & 0.56 \\
\bottomrule
\end{tabular}
\end{table}

We generate time series with $d = 20$ dimensions where only $3$ exhibit a structural change. In addition to standard Gaussian noise we contaminate 5\% of the entries on average using heavy-tailed $t$-distributed noise with degrees of freedom varying from $1.5$ to $1.9$. To learn the structural changes and be robust against the heavy-tailed noise, we design a smooth subspace model $(x_i(t))_{t\geq 0}$ for $i = 1,\ldots,r$ in continuous-time using a Gaussian process (GP) prior, $x_i(t) \sim \mathcal{GP}(0,\mathsf{k}_\nu(t,t'))$, with Mat{\'e}rn-${3/2}$ kernel with $\nu = 3/2$ \citep{williams2006gaussian}. This particular GP admits a state-space representation amenable to filtering \citep{hartikainen2010kalman} as it can be recast \citep{sarkka2013spatiotemporal} as the stochastic differential equation (SDE): 
\begin{align}\label{eq:SDE_GP}
\frac{\md \mathsf{x}_i(t)}{\md t} = \begin{bmatrix}
0 & 1 \\
-\kappa^2 & - 2\kappa
\end{bmatrix}
\mathsf{x}_i(t) + \begin{bmatrix}
0 \\
1
\end{bmatrix} w_i(t)
\end{align}
where $\mathsf{x}_i(t) = [x_i(t), \md x_i(t) / \md t]$ and $\kappa = \sqrt{2\nu}/{\ell}$. We choose $\sigma^2 = 0.1$ and $\ell = 0.1$ and discretize equation \eqref{eq:SDE_GP} with the step-size $\gamma = 0.001$. We discretize the SDEs for $i = 1,\ldots,r$ and construct a joint state which leads to a linear dynamical system in $2r$ dimensions for which we can run PSMF. The details of the discretization and the corresponding PSMF model are given in Supp.~\ref{app:Experiment2}, along with an illustration of the learned GP features.

We run PSMF with the discretized GP subspace model with $r = 10$. We note that the goal is to obtain a representation that is helpful for changepoint detection. We first employ PELT \citep{killick2012optimal} as an example changepoint detection method directly on the time-series to create a baseline. Then, we estimate a smooth GP subspace with PSMF and run PELT on that subspace (i.e.,~the columns of $X$). We additionally compare against a multivariate implementation of Bayesian online CPD \citep[MBOCPD,][]{adams2007bayesian}. The results in Table~\ref{table:GPcpd} clearly show the improved performance and robustness of PSMF.

\subsection{Missing Value Imputation}
\label{sec:ExperimentPollution}
Finally, we test our method on imputation of missing values in time-series data. We consider data from various domains, including three time series of air pollutants measured across London\footnote{Data collected from \url{https://londonair.org.uk}.}, a gas-sensor dataset by \citet{burgues2018estimation} obtained from the UCI repository \citep{bache2013uci}, and five years of daily closing prices of stocks in the S\&P500 index.\footnote{See: \url{https://www.kaggle.com/camnugent/sandp500}.} The air pollution series contain hourly measurements between 2018-06-01 and 2018-12-01 ($n = 4393$) and consist of NO$_2$ measured at $d = 83$ sites, PM10 from $d = 74$ sites, and PM25 measured at $d = 26$ sites. For the gas sensor dataset (Gas) we have $n = 295,719$ and $d = 19$ and for the S\&P500 dataset we have $n = 1259$ and $d = 505$. The air pollution datasets contain a high number of missing values due to sensor failures and maintenance, and the stock price dataset contains missing values due to stocks being added to the index.

To test the accuracy of imputations, we randomly remove segments of length 
$20$ and thereby construct datasets with $30\%$ missing data. We compare our methods against four baselines. The first is an MLE approach to online probabilistic matrix 
factorization \citep{yildirim2012online,dynamicmatrixfact, fevotte2013non} where we construct an SSM where $C$ is constant, denoted as MLE-SMF. The second is  
temporal matrix factorization (TMF) which is an adaptation of the 
optimisation-based method of \citet{yu2016temporal}. We also add two popular \emph{offline} methods that can only operate on the entire data matrix at once: PMF \citep{mnih2008probabilistic} and BPMF \citep{salakhutdinov2008bayesian}.

We assume the subspace model to be a random walk, $f_\theta(x) = x$, thus avoiding the parameter estimation problem, and we use the final estimates of $C$ and $X$ for data imputation. We formulate TMF with the weight matrix set 
to identity for tractability. We set $r = 10$ for all methods and datasets. For 
PSMF and MLE-SMF we set $R_k := R = \rho \otimes I_r$ with $\rho = 10$, 
$P_0 = I_r$, $Q_k := Q = q \otimes I_r$ with $q = 0.1$. For rPSMF we use $R_0 = R$ and $Q_0 = Q$ and set $\lambda_0 = 1.8$. For PSMF and rPSMF we let 
$V_0 = v_0 \otimes I_r$ where $v_0 = 2$. All methods are run for two iterations (epochs) over the data to limit run-times, and we repeat the experiments $100$ times with different initializations and missing data patterns. In Table~\ref{table:Predictive} we see that PSMF and rPSMF attain lower RMSEs compared to all other methods and that they are competitive in terms of running time. The advantage of our method is especially noticeable on the $\text{NO}_2$, SP500, and Gas datasets.

We can also measure the proportion of missing values that lie within a $2\sigma$ coverage interval of the approximate posterior distribution. Table~\ref{table:coverage} shows how our method improves over the uncertainty quantification of MLE-SMF (the other methods do not provide a posterior distribution). This illustrates the added value of the matrix-variate prior on $C$, as well as our inference scheme. Note that rPSMF obtains a higher coverage percentage than PSMF on three of the datasets, which is due to the sequential updating of the noise covariances. Additional results with 20\% and 40\% missing data are available in Supp.~\ref{app:Experiment3}.

\section{CONCLUSION}\label{sec:Conc}
We have recast the problem of probabilistic dimensionality reduction for time-series as a joint state filtering and parameter estimation problem in a state-space model. Our model is fully probabilistic and we provide a tractable sequential inference algorithm to run the method with linear computational complexity with respect to the number of data points. Our algorithm is purely recursive and can be used in streaming settings. In particular, the batch algorithms that we compare to (such as PMF and BPMF) would incur the full runtime costs when a new sample arrives. This would likely be prohibitive in practice (e.g. BPMF takes 90 seconds for the Gas dataset) and would increase significantly with the dataset size. By contrast, our method only requires incremental computation to process a new sample (e.g. on the order of milliseconds for the Gas dataset).

We have also extended our initial model into a robust version to handle model misspecification and datasets contaminated with outliers. The robust version of our method has been shown to be advantageous in light of model misspecification.

The state-space formulation of the problem opens many directions for future research such as (i) the use of general models for $f_\theta$ or non-Gaussian likelihoods, (ii) the exploration of the use of switching SSMs, and (iii) the integration of more advanced inference techniques such as ensemble Kalman filters or Monte Carlo-based methods for nonlinear and non-Gaussian generalizations of our model.

\acknowledgments{\"O.~D.~A. is supported by the Lloyd’s Register Foundation Data Centric Engineering Programme and EPSRC Programme Grant EP/R034710/1. The work of GvdB is supported in part by EPSRC grant EP/N510129/1 to the Alan Turing Institute. T.~D. acknowledges support from EPSRC EP/T004134/1, UKRI Turing AI Fellowship EP/V02678X/1, and Lloyd’s Register Foundation programme on Data Centric Engineering through the London Air Quality project.}

\bibliography{psmf}
\bibliographystyle{apalike}

%%%%%%%%%%%%%%%%%%%%%%%%%%%%%%%%%%%%%%%%%%%%%%%%%%%%%%%%%%%%%%%%%%%%%%%%%%%%%%%
%%%%%%%%%%%%%%%%%%%%%%%%%%%%%%%%%%%%%%%%%%%%%%%%%%%%%%%%%%%%%%%%%%%%%%%%%%%%%%%
% DELETE THIS PART. DO NOT PLACE CONTENT AFTER THE REFERENCES!
%%%%%%%%%%%%%%%%%%%%%%%%%%%%%%%%%%%%%%%%%%%%%%%%%%%%%%%%%%%%%%%%%%%%%%%%%%%%%%%
%%%%%%%%%%%%%%%%%%%%%%%%%%%%%%%%%%%%%%%%%%%%%%%%%%%%%%%%%%%%%%%%%%%%%%%%%%%%%%%
\appendix
%\clearpage
\numberwithin{equation}{section}

\setcounter{equation}{0}
\setcounter{page}{1}
\setcounter{figure}{0}

\onecolumn
\rule{\textwidth}{4pt}
\vskip\baselineskip
\begin{center}
\Large \bfseries Supplementary Material    
\end{center}%
\rule{\textwidth}{1pt}

\section{PRELIMINARIES}\label{app:prelim}
In this section, we list some linear algebra properties related to Kronecker products, which will be used in proofs.

We denote the Kronecker product $\otimes$. Let $A$ be of dimension ${m\times r}$ and $B$ be of dimension ${r \times n}$; then \cite{harville1997matrix},
\begin{align}
{A}\otimes{B} = \begin{bmatrix} a_{11} {B} & \cdots & a_{1r}{B} \\ \vdots & \ddots & \vdots \\ a_{m1} {B} & \cdots & a_{mr} {B} \end{bmatrix}.
\end{align}
For matrices $A,B$ and $X$, it holds that
\begin{align}\label{generalTrick}
\vect(AXB) = (B^\top \otimes A)\vect(X).
\end{align}
We can particularize this formula for an $r \times 1$ vector $x$ as
\begin{align}\label{vecTrick2}
Ax = \vect(Ax) = (x^\top \otimes I_d) \vect(A).
\end{align}
Kronecker product has the following mixed product property \cite{harville1997matrix}
\begin{align}\label{mixedProduct}
(A \otimes B)(C \otimes D) = (AC) \otimes (BD),
\end{align}
and the inversion property \cite{harville1997matrix}
\begin{align}\label{invKron}
(A\otimes B)^{-1} = A^{-1} \otimes B^{-1}.
\end{align}

\section{PROOF OF PROPOSITION~\ref{prop1}}
\label{sec:appProofProp1}

We adapt the proof in \cite{akyildiz2019dictionary}. We first note that for a 
Gaussian prior $\tilde{p}(c | y_{1:k-1}) = \NPDF(c; c_{k-1},L_{k-1})$ and 
likelihood of the form $p(y_k | y_{1:k-1},c) = \NPDF(y_k; H_k c, G_k)$, we can 
write the posterior analytically $\tilde{p}(c|y_{1:k}) = \NPDF(c; c_k, L_k)$ 
where (see, e.g., \cite{Bishop2006})
\begin{align}
	c_k &= c_{k-1} + L_{k-1} H_k^\top (H_k L_{k-1} H_k^\top + G_k)^{-1} 
	(y_k - H_k c_{k-1}), \label{proof:KalmanMeanUpdate}\\
	L_k &= L_{k-1} - L_{k-1} H_k^\top (H_k L_{k-1} H_k^\top + G_k)^{-1} 
	H_k L_{k-1}. \label{proof:KalmanCovUpdate}
\end{align}

In order to obtain an efficient matrix-variate update rule using this 
vector-form update, we first rewrite the likelihood as
\begin{align}\label{eq:approximateLikelihoodCrewritten}
	\tilde{p}(y_k | c, y_{1:k-1}) &= \NPDF(y_k; H_k c, G_k)
\end{align}
where $H_k = \bar{\mu}_k^\top \otimes I_d$ and $G_k = \eta_k \otimes I_d$. We 
note that, we have $L_0 = V_0 \otimes I_d$ and we assume as an induction 
hypothesis that $L_{k-1} = V_{k-1} \otimes I_d$.  We start by showing that the 
update \eqref{proof:KalmanCovUpdate} can be greatly simplified using the 
special structure we impose. By the mixed product property 
\eqref{mixedProduct} and the inversion property \eqref{invKron} we obtain
\begin{align}
	\left[H_k L_{k-1} H_k^{\top} + G_k\right]^{-1} = 
	\left[(\bar{\mu}_k^{\top} \otimes I_d)(V_{k-1} \otimes 
		I_d)(\bar{\mu}_k \otimes I_d) + \eta_k \otimes I_d\right]^{-1} 
	= (\bar{\mu}_k^{\top} V_{k-1} \bar{\mu}_k + \eta_k)^{-1} \otimes I_d
\end{align}
and therefore,
\begin{align}
	L_{k} =&\,\, (V_{k-1}\otimes I_d) - (V_{k-1}\bar{\mu}_k \otimes I_d) 
	\times ((\bar{\mu}_k^\top V_{k-1} \bar{\mu}_k + \eta_k)^{-1} \otimes 
	I_d) \times (\bar{\mu}_k^\top V_{k-1} \otimes I_d).
\end{align}
One more use of the mixed product property \eqref{mixedProduct} yields
\begin{align}
	L_{k} = \left( V_{k-1} - \frac{V_{k-1} \bar{\mu}_k \bar{\mu}_k^\top 
			V_{k-1}}{\bar{\mu}_k^\top V_{k-1} \bar{\mu}_k + 
			\eta_k} \right) \otimes I_d.
\end{align}
Thus, we have $L_k = V_k \otimes I_d$ where,
\begin{align}
	\label{VkUpdate}
	V_k = V_{k-1} - \frac{V_{k-1} \bar{\mu}_k \bar{\mu}_k^\top 
		V_{k-1}}{\bar{\mu}_k^\top V_{k-1} \bar{\mu}_k + \eta_k}.
\end{align}
We have shown that the sequence $(L_k)_{k\geq 1}$ preserves the Kronecker 
structure. Next, we substitute $L_{k-1} = V_{k-1} \otimes I_d$, $H_k = 
\bar{\mu}_k^\top \otimes I_d$ and $G_k = \eta_k \otimes I_d$ into 
\eqref{proof:KalmanMeanUpdate} and we obtain
\begin{align}
	c_{k} =\,\,&c_{k-1} + (V_{k-1} \otimes I_d) (\bar{\mu}_k \otimes I_d) 
	\times \left( (\bar{\mu}_k^\top V_{k-1} \bar{\mu}_k + \eta_k)^{-1}  
		\otimes I_d \right) \times (y_k - (\bar{\mu}_k^\top \otimes 
	I_d) c_{k-1}).
\end{align}
The use of the mixed product property \eqref{mixedProduct} leaves us with
\begin{align}
	c_{k} =\,\,&c_{k-1} + (V_{k-1} \bar{\mu}_k \otimes I_d) 
	\left((\bar{\mu}_k^\top V_{k-1} \bar{\mu}_k + \eta_k) \otimes I_d 
	\right)^{-1} \times (y_k - (\bar{\mu}_k^\top \otimes I_d) c_{k-1}).
\end{align}
Using \eqref{invKron} and again \eqref{mixedProduct} yields
\begin{align}
	\label{eqck}
	c_{k} = c_{k-1} + &\left[\frac{V_{k-1} \bar{\mu}_k}{\bar{\mu}_k^\top 
			V_{k-1} \bar{\mu}_k + \eta_k} \otimes I_d \right] 
	\times (y_k - (\bar{\mu}_k^\top \otimes I_d) c_{k-1}).
\end{align}
Using \eqref{vecTrick2}, we get
\begin{equation}
	c_{k} = c_{k-1} +\left[
	\frac{V_{k-1} \bar{\mu}_k}{\bar{\mu}_k^\top V_{k-1} \bar{\mu}_k + 
		\eta_k} \otimes I_d \right] (y_k - C_{k-1} \bar{\mu}_k).
\end{equation}
We now note that $(y_k - C_{k-1} \bar{\mu}_k)$ and $\frac{V_{k-1} 
	\bar{\mu}_k}{\bar{\mu}_k^\top V_{k-1} \bar{\mu}_k + \eta_k}$ are 
vectors.  Hence, rewriting the above expression as
\begin{align}
	\label{eqck1}
	c_{k} = c_{k-1} &+\left[
		\vect\left(\frac{V_{k-1} \bar{\mu}_k}{\bar{\mu}_k^\top V_{k-1} 
				\bar{\mu}_k + \eta_k}\right) \otimes I_d  
	\right] \times \vect(y_k - C_{k-1} \bar{\mu}_k),
\end{align}
we can apply \eqref{vecTrick2} and obtain
\begin{equation}
	c_{k} = c_{k-1} +\vect\left(\frac{(y_k - C_{k-1} \bar{\mu}_k) 
			\bar{\mu}_k^\top V_{k-1}^\top}{\bar{\mu}_k^\top 
			V_{k-1} \bar{\mu}_k + \eta_k}\right).
\end{equation}
Hence up to a reshaping operation, we have the update rule 
\eqref{FullPosteriorMean} and conclude the proof. \hfill $\qed$

\section{PROOF OF PROPOSITION~\ref{propXLik}}
\label{sec:appProofProp2}
Recall that we have a posterior of the form at time $k-1$
\begin{align}
	p(c|y_{1:k-1}) = \NPDF(c; c_{k-1}, V_{k-1} \otimes I_d),
\end{align}
and we are given the likelihood
\begin{align}
	p(y_k | c,x_k) = \NPDF(y_k; (x_k\otimes I_d) c, R_k).
\end{align}
We are interested in computing
\begin{align}
	p(y_k | y_{1:k-1},x_k) &= \int p(c|y_{1:k-1}) p(y_k | c,x_k) \md c.
\end{align}
This integral is analytically tractable since both distributions are Gaussian 
and it is given by \cite{Bishop2006}
\begin{align}
	p(y_k | y_{1:k-1},x_k) = \NPDF(y_k; (x_k^\top \otimes I_d) c_k, R_k + 
	(x_k^\top \otimes I_d) (V_{k-1} \otimes I_d) (x_k \otimes I_d)).
\end{align}
Using the mixed product property \eqref{mixedProduct}, one obtains
\begin{align}
	p(y_k | y_{1:k-1},x_k) = \NPDF(y_k; C_{k-1} x_k, R_k + x_k^\top 
	V_{k-1} x_k \otimes I_d).
\end{align}

\section{DERIVATION OF THE NEGATIVE LOG-LIKELIHOOD}\label{sec:app:negLogLik}
We obtain the marginal likelihood as
\begin{align}
\tilde{p}_\theta(y_k | y_{1:k-1}) &= \int \tilde{p}(y_k | y_{1:k-1},c) \tilde{p}(c | y_{1:k-1}) \md c \\
&= \NPDF(y_k ; C \bar{\mu}_k, \eta_k \otimes I_d) \NPDF(c ; c_{k-1}, V_{k-1} \otimes I_d) \\
&= \NPDF(y_k ; (\bar{\mu}_k^{\top} \otimes I_d)c, \eta_k \otimes I_d) \NPDF(c ; c_{k-1}, V_{k-1} \otimes I_d) \\
&= \NPDF(y_k ; (\bar{\mu}_k^{\top} \otimes I_d)c_{k-1}, (\bar{\mu}_k^\top V_{k-1} \bar{\mu}_k + \eta_k) \otimes I_d) \\
&= \NPDF\left(y_k; C_{k-1} f_\theta(\mu_{k-1}), \left(\|f_\theta(\mu_{k-1})\|^2_{V_{k-1}} + \eta_k\right) \otimes I_d \right).
\end{align}
where in the last line we have used the fact that $\bar{\mu}_k = f_{\theta}(\mu_{k-1})$ and properties from Supp.~\ref{app:prelim}. It is then straightforward to show that
\begin{align}
    &- \log \tilde{p}_{\theta}(y_k \given y_{1:k-1}) = -\log \left[ (2\pi)^{-d/2} \cdot | (\|f_\theta(\mu_{k-1})\|^2_{V_{k-1}} + \eta_k) \otimes I_d|^{-1/2} \right. \\
    &\quad \left. \cdot \exp\left( -\tfrac{1}{2}(y_k - C_{k-1}f_{\theta}(\mu_{k-1}))^{\top}\left( \|f_\theta(\mu_{k-1})\|^2_{V_{k-1}} + \eta_k) \otimes I_d \right)^{-1} (y_k - C_{k-1} f_{\theta}(\mu_{k-1}) \right) \right]
\end{align}
which simplifies to
\begin{align}
    - \log \tilde{p}_{\theta}(y_k \given y_{1:k-1}) &= \frac{d}{2} \log (2\pi) + \frac{d}{2} \log(\|f_\theta(\mu_{k-1})\|^2_{V_{k-1}} + \eta_k) + \tfrac{1}{2} \frac{ \| y_k - C_{k-1} f_{\theta}(\mu_{k-1}) \|^2}{ \|f_\theta(\mu_{k-1})\|^2_{V_{k-1}} + \eta_k }.
\end{align}

\section{THE PROBABILISTIC MODEL TO HANDLE MISSING DATA}\label{app:Missing}

To obtain update rules that can explicitly handle missing data, we only need to modify the likelihood. When we receive an observation vector with missing entries, we model it as $z_k = m_k \odot y_k$ where $m_k \in \{0,1\}^d$ is a mask vector that contains zeros for missing entries and ones otherwise. We note that $z_k = M_k y_k$ where $M_k = \diag(m_k)$, which results in the likelihood $p(z_k  | c, x_k) = \NPDF(z_k ; M_k C x_k, M_k R_k M_k^\top)$. The update rules for PSMF and the robust model, rPSMF, can be easily re-derived using this likelihood and are essentially identical to Algorithm~\ref{algDF} with masks. Here we discuss the case of PSMF with missing values, rPSMF with missing values is discussed in Supp.~\ref{app:Robust}.

We define the probabilistic model with missing data as
\begin{align}
p(C) &= \mathcal{MN}(C;C_0,I_d,V_0),  \\
p(x_0) &= \NPDF(x_0; \mu_0, P_0), \\
p_\theta(x_k | x_{k-1}) &= \NPDF(x_k; f_\theta(x_{k-1}), Q_k), \\
p(z_k | x_k, C) &= \NPDF(z_k; M_k C x_k, M_k R_k M_k^\top).
\end{align}
This model can explicitly handle the missing data when $(M_k)_{k\geq 1}$ (the missing data patterns) are given. The update rules for this model are defined using masks and are similar to the full data case. In what follows, we derive the update rules for this model by explicitly handling the masks and placing them into our updates formally. For the missing-data case, however, we need a minor approximation in the covariance update rule in order to keep the method efficient. 
Assume that we are given $\tilde{p}(c | z_{1:k-1}) = \NPDF(c; c_{k-1},  V_{k-1} \otimes I_d )$ and the likelihood
\begin{align}
\tilde{p}(z_k  | c, z_{1:k-1}) = \NPDF(z_k ; M_k C \bar{\mu}_k, \eta_k \otimes I_d)
\end{align}
where
\begin{align}
\eta_k = \frac{\Tr(M_k R_k M_k^\top + M_k C_{k-1} \bar{P}_k C_{k-1}^\top M_k^\top)}{m}.
\end{align}
In the sequel, we derive the update rules corresponding to the our method with missing data. The derivation relies on the proof of Prop.~\ref{prop1}. We note that using \eqref{generalTrick}, we can obtain the likelihood
\begin{align}
\tilde{p}(z_k | c, z_{1:k-1}) = \NPDF(z_k; H_k c, \eta_k \otimes I_d)
\end{align}
where $c = \vect(C)$ and $H_k = \bar{\mu}_k^\top \otimes M_k$. Deriving the posterior in the same way as in the proof of Prop.~\ref{prop1}, and using the approximation $\bar{\mu}_k^{\top} V_{k-1} \bar{\mu}_k \otimes M_k \approx \bar{\mu}_k^{\top} V_{k-1} \bar{\mu}_k \otimes I_d$, leaves us with the covariance update in the form
\begin{align}
P_k = V_{k-1} \otimes I_d - \frac{V_{k-1} \bar{\mu}_k \bar{\mu}_k^\top V_{k-1}}{\bar{\mu}_k^\top V_{k-1} \bar{\mu}_k + \eta_k} \otimes M_k.
\end{align}
Unlike the previous case, this covariance does not simplify to a form $P_k = V_k \otimes I_d$ easily. For this reason, we approximate it as
\begin{align}
P_k \approx V_k \otimes I_d,
\end{align}
where $V_k$ is in the same form of missing-data free updates. To update the mean, we proceed in a similar way as in the proof of Prop.~\ref{prop1} as well. Straightforward calculations lead to the update
\begin{align}\label{FullPosteriorMeanMissing}
C_k = C_{k-1} + \frac{(z_k - M_k C_{k-1} \bar{\mu}_k) \bar{\mu}_k^\top V_{k-1}}{\bar{\mu}_k^\top V_{k-1} \bar{\mu}_k + \eta_k}, \quad \mbox{ for $k \ge 1$.}
\end{align}
To update $x_k$, once we fix $C_{k-1}$, everything straightforwardly follows by replacing $C_{k-1}$ by $M_k C_{k-1}$ in the update rules for $(x_k)_{k\geq 1}$. Finally, the negative log-likelihood  $\tilde{p}_{\theta}(z_k | z_{1:k-1})$ can be derived similarly to the non-missing case in Sec.~\ref{sec:app:Gradients}, and equals
\begin{equation}
    -\log \tilde{p}_{\theta}(z_k | z_{1:k-1}) \stackrel{c}{=} \tfrac{1}{2} \sum_{j=1}^d \log u_{jk} + \tfrac{1}{2} (z_k - M_k C_{k-1} f_{\theta}(\mu_{k-1}))^{\top} U_k^{-1} (z_k - M_k  C_{k-1}f_{\theta}(\mu_{k-1})),
\end{equation}
where $\stackrel{c}{=}$ denotes equality up to constants that do not depend on $\theta$ and $U_k = \|f_{\theta}(\mu_{k-1})\|_{V_{k-1}}^2 \otimes M_k + \eta_k \otimes I_d$ is a $d$-dimensional diagonal matrix with elements $u_{jk}$ for $j=1,\ldots,d$.

\section{THE ROBUST MODEL}
\label{app:Robust}

Recall that the model definitions for robust PSMF are as follows
\begin{align}
p(s) &= \IGPDF(s ; \lambda_0/2, \lambda_0/2) \\
p(C \given s) &= \mathcal{MN}(C;C_0, I_d, s V_0)), \\
p(x_0 \given s) &= \NPDF(x_0; \mu_0, s P_0),\\
p_\theta(x_k \given x_{k-1}, s) &= \NPDF(x_k; f_\theta(x_{k-1}), s Q_{0}),  \\
p(y_k \given x_k, C, s) &= \NPDF(y_k; C x_k, s R_0), 
\end{align}

Before we present the derivation, we recall the following definitions.

\begin{defn}[Inverse-Gamma Distribution]
The inverse-gamma distribution is given by
\begin{equation}
    \IGPDF(s ; \alpha, \beta) = \frac{\beta^{\alpha}}{\Gamma(\alpha)} \left( \frac{1}{s} \right)^{\alpha + 1} \exp \left( -\beta / s \right)
\end{equation}
for $\alpha, \beta > 0$, and with $\Gamma(\cdot)$ the Gamma function.
\end{defn}
\begin{defn}[Multivariate $t$ Distribution]
For $y \in \mathbb{R}^d$ the multivariate $t$ distribution with $\lambda$ degrees of freedom is
\begin{equation}
    \TPDF(y ; \mu, \Sigma, \lambda) = \frac{1}{(\pi \lambda)^{d/2} |\Sigma|^{1/2}} \frac{\Gamma((\lambda + d)/2)}{\Gamma(\lambda/2)} \left( 1 + \frac{\Delta^2}{\lambda} \right)^{-(\lambda + d)/2}
\end{equation}
where $\Delta^2 = (y - \mu)^{\top} \Sigma^{-1} (y - \mu)$.
\end{defn}

Since we again assume the model to be Markovian, we extend the conditional independence and Markov properties \cite[see, e.g.,][]{sarkka2013bayesian} to the case with a scale variable
\begin{propty}[Conditional independence]
\label{propty:rob_cond_ind}
The measurement $y_k$ given the coefficient $x_k$ and scale variable $s$, is conditionally independent of past measurements and coefficients
\begin{equation}
    p(y_k \given x_{1:k}, y_{1:k-1}, s) = p(y_k \given x_k, s).
\end{equation}
\end{propty}
\begin{propty}[Markov property of coefficients]
\label{propty:rob_markov}
When conditioning on $s$ the coefficients $x_k$ form a Markov sequence, such that
\begin{equation}
    p(x_k \given x_{1:k-1}, y_{1:k-1}, s) = p(x_k \given x_{k-1}, s).
\end{equation}
\end{propty}

We also present the following lemma's used in the derivation.

\begin{lem}\label{lem:normal_ig_posterior}
For $y \in \mathbb{R}^d$ with $p(y \given s) = \NPDF(y ; \mu, \Sigma)$ and $p(s) = \IGPDF(s ; \alpha, \beta)$ we have
\begin{align}
p(y) &= \frac{1}{(2 \pi \beta)^{d/2} |\Sigma|^{1/2}} \frac{\Gamma(\alpha + d/2)}{\Gamma(\alpha)} \left( 1 + \frac{\Delta^2}{2\beta} \right)^{-(\alpha + d/2)} \\
p(s | y) &= \IGPDF(s ; \alpha + d/2, \beta + \tfrac{1}{2}\Delta^2) .
\end{align}
In particular, if $\alpha = \beta = \lambda / 2$ then $p(y) = \TPDF(y ; \mu, \Sigma, \lambda)$.
\end{lem}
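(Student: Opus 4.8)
The plan is to prove this by a single direct marginalization, exploiting that the inverse-gamma distribution is conjugate to the scale of a Gaussian. First I would write
\[
p(y) = \int_0^\infty p(y \given s)\, p(s)\, \md s = \int_0^\infty \NPDF(y; \mu, s\Sigma)\, \IGPDF(s; \alpha, \beta)\, \md s,
\]
reading the conditional covariance as $s\Sigma$ in accordance with \eqref{RobPriorC}--\eqref{RobObservationMod}, and substitute the two explicit densities. Using $|s\Sigma|^{1/2} = s^{d/2}|\Sigma|^{1/2}$ and setting $\Delta^2 = (y-\mu)^\top \Sigma^{-1}(y-\mu)$, the integrand collapses to the $s$-free constant $(2\pi)^{-d/2}|\Sigma|^{-1/2}\,\beta^{\alpha}/\Gamma(\alpha)$ multiplied by $s^{-(\alpha + d/2) - 1}\exp\bigl(-(\beta + \tfrac12\Delta^2)/s\bigr)$.

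The key step is then to recognize this last factor as the unnormalized kernel of $\IGPDF(s; \alpha + d/2, \beta + \tfrac12\Delta^2)$, so that $\int_0^\infty s^{-a-1}\exp(-b/s)\,\md s = \Gamma(a)/b^{a}$ with $a = \alpha + d/2$ and $b = \beta + \tfrac12\Delta^2$. Plugging this in and doing the routine bookkeeping --- factoring $(\beta + \tfrac12\Delta^2)^{-(\alpha + d/2)} = \beta^{-(\alpha + d/2)}\bigl(1 + \Delta^2/(2\beta)\bigr)^{-(\alpha + d/2)}$ and absorbing the leftover $\beta^{-d/2}$ into the $(2\pi)^{-d/2}$ prefactor --- produces exactly the stated formula for $p(y)$.

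For the posterior I would simply invoke Bayes' rule: since $p(y)$ does not depend on $s$, $p(s \given y) \propto p(y\given s)\,p(s)$, and we have already identified this as the kernel of $\IGPDF(s; \alpha + d/2, \beta + \tfrac12\Delta^2)$; because $p(s\given y)$ is a proper density it must be the normalized version, which is the second claim. The special case $\alpha = \beta = \lambda/2$ then follows by substitution: $(2\pi\beta)^{d/2} = (\pi\lambda)^{d/2}$, $\Gamma(\alpha + d/2) = \Gamma((\lambda+d)/2)$, $\Gamma(\alpha) = \Gamma(\lambda/2)$, and $(1 + \Delta^2/(2\beta))^{-(\alpha + d/2)} = (1 + \Delta^2/\lambda)^{-(\lambda+d)/2}$, matching $\TPDF(y;\mu,\Sigma,\lambda)$ from the definition above. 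I do not expect a genuine obstacle here: the argument is the standard Gaussian scale-mixture computation, and the only place to be careful is keeping track of the powers of $\beta$ and $2\pi$ when rewriting the inverse-gamma normalizing constant in the form stated in the lemma.
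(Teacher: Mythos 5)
Your proof is correct and is the standard Gaussian scale-mixture computation; the paper itself states this lemma without proof, so there is nothing to diverge from. You were also right to read the conditional covariance as $s\Sigma$ rather than the $\Sigma$ literally written in the lemma statement (as written, $y$ would be independent of $s$ and the conclusion would fail), and your bookkeeping of the $\beta$ and $2\pi$ powers, the identification of the inverse-gamma kernel for $p(s\given y)$, and the specialization $\alpha=\beta=\lambda/2$ all check out against the paper's definitions of $\IGPDF$ and $\TPDF$.
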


\begin{lem}\label{lem:igtransform}
If $p(s) = \IGPDF(s ; \alpha, \beta)$ and $\omega = \beta/\alpha$, then $\omega \cdot p(\omega s) = \IGPDF(s ; \alpha, \alpha)$.
\end{lem}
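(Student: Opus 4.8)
The plan is a direct computation using the explicit inverse-gamma density from the definition above. Write $p(s) = \IGPDF(s;\alpha,\beta) = \frac{\beta^\alpha}{\Gamma(\alpha)}\, s^{-(\alpha+1)} \exp(-\beta/s)$ for $s > 0$. First I would substitute the argument $\omega s$ to get $p(\omega s) = \frac{\beta^\alpha}{\Gamma(\alpha)}\,(\omega s)^{-(\alpha+1)} \exp\!\big(-\beta/(\omega s)\big)$, and then multiply through by $\omega$.

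The one identity doing all the work is $\beta/\omega = \alpha$, which holds by the choice $\omega = \beta/\alpha$. This immediately turns the exponential factor into $\exp(-\alpha/s)$, which is exactly the exponential in $\IGPDF(s;\alpha,\alpha)$. For the prefactor, collecting powers of $\omega$ gives $\omega \cdot \omega^{-(\alpha+1)} = \omega^{-\alpha}$, and then $\beta^\alpha \omega^{-\alpha} = \beta^\alpha (\beta/\alpha)^{-\alpha} = \alpha^\alpha$. Assembling the pieces yields $\omega\, p(\omega s) = \frac{\alpha^\alpha}{\Gamma(\alpha)}\, s^{-(\alpha+1)} \exp(-\alpha/s) = \IGPDF(s;\alpha,\alpha)$, as claimed.

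A slightly more conceptual phrasing I could also use is a change-of-variables argument: if $S \sim \IGPDF(\alpha,\beta)$, the scaling property of the inverse-gamma family gives $S/\omega \sim \IGPDF(\alpha,\beta/\omega) = \IGPDF(\alpha,\alpha)$, and the density of $S/\omega$ at $s$ is precisely $\omega\, p(\omega s)$ by the Jacobian factor. There is no genuine obstacle here — the statement is just the cancellation $\beta/\omega=\alpha$ plus bookkeeping of the normalizing constant. The only point to note is that $\omega > 0$ (since $\alpha,\beta > 0$), so the substitution is well defined and $\omega\, p(\omega s)$ is again a normalized density on $(0,\infty)$.
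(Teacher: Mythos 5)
Your computation is correct and is essentially identical to the paper's proof, which likewise just substitutes $\omega s = (\beta/\alpha)s$ into the explicit inverse-gamma density and simplifies the prefactor and exponent. The change-of-variables remark you add is a nice sanity check but not needed.
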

\begin{proof}
\begin{equation}
\frac{\beta}{\alpha} p\left(\frac{\beta}{\alpha} s\right) = \frac{\beta}{\alpha} \frac{\beta^{\alpha}}{\Gamma(\alpha)} \left( \frac{\alpha}{\beta s} \right)^{\alpha + 1} \exp \left( - \frac{\beta \alpha}{\beta s} \right) = \frac{\alpha^{\alpha}}{\Gamma(\alpha)} \left( \frac{1}{s} \right)^{\alpha + 1} \exp\left( -\frac{\alpha}{s} \right) = \IGPDF(s ; \alpha, \alpha).
\end{equation}
\end{proof}

\begin{lem}\label{lem:partitioned_t}
For a partitioned random variable $y = [y_a, y_b]^{\top}$ with $y_a \in \mathbb{R}^{d_a}$ and $y_b \in \mathbb{R}^{d_b}$ that follows a multivariate $t$ distribution given by
\begin{equation}
p(y) = p(y_a, y_b) = 
\TPDF\left(
    \begin{bmatrix}
    y_a \\
    y_b
    \end{bmatrix}
    ;
    \begin{bmatrix}
    \mu_a \\
    \mu_b
    \end{bmatrix}
    ,
    \begin{bmatrix}
    \Sigma_{aa} & \Sigma_{ab} \\
    \Sigma_{ab}^{\top} & \Sigma_{bb}
    \end{bmatrix}
    ,
    \lambda
    \right),
 \end{equation}
the marginal and conditional densities are given by
\begin{align}
    p(y_b) &= \TPDF(y_b ; \mu_b, \Sigma_{bb}, \lambda) \\
    p(y_a \given y_b) &= \TPDF(y_a ; \mu_{a|b}, \Sigma_{a|b}, \lambda_{a|b}),
\end{align}
with
\begin{align}
    \lambda_{a|b} &= \lambda + d_b \\
    \mu_{a|b} &= \mu_a + \Sigma_{ab} \Sigma_{bb}^{-1} (y_b - \mu_b) \\
    \Sigma_{a|b} &= \frac{\lambda + (y_b - \mu_b)^{\top}\Sigma_{bb}^{-1} (y_b - \mu_b)}{\lambda + d_b} \left( \Sigma_{aa} - \Sigma_{ab} \Sigma_{bb}^{-1} \Sigma_{ab}^{\top} \right).
\end{align}
\end{lem}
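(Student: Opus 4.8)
The plan is to reduce the claim to ordinary Gaussian marginalization and conditioning by exploiting the Gaussian scale-mixture representation of the multivariate $t$ already used in Lemma~\ref{lem:normal_ig_posterior}. Introduce a latent scale $s$ with $p(s) = \IGPDF(s; \lambda/2, \lambda/2)$, and let $(y_a, y_b)^{\top} \given s$ be Gaussian with mean $(\mu_a, \mu_b)^{\top}$ and covariance $s$ times the block scale matrix in the statement. By Lemma~\ref{lem:normal_ig_posterior} (with $\alpha = \beta = \lambda/2$), marginalizing $s$ recovers exactly the joint $\TPDF(y; \mu, \Sigma, \lambda)$ we started from, so it suffices to compute the marginal and the conditional \emph{through} this representation.

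For the marginal, conditioning on $s$ and using Gaussian marginalization gives $p(y_b \given s) = \NPDF(y_b; \mu_b, s\Sigma_{bb})$, and integrating out $s$ with Lemma~\ref{lem:normal_ig_posterior} again yields $p(y_b) = \TPDF(y_b; \mu_b, \Sigma_{bb}, \lambda)$. For the conditional I would write $p(y_a \given y_b) = \int p(y_a \given y_b, s)\, p(s \given y_b)\, \md s$ and supply two ingredients. First, Gaussian conditioning applied to the scaled covariance $s\Sigma$ gives $p(y_a \given y_b, s) = \NPDF(y_a; \mu_{a|b}, s\,\widetilde{\Sigma}_{a|b})$ with $\mu_{a|b} = \mu_a + \Sigma_{ab}\Sigma_{bb}^{-1}(y_b - \mu_b)$ and $\widetilde{\Sigma}_{a|b} = \Sigma_{aa} - \Sigma_{ab}\Sigma_{bb}^{-1}\Sigma_{ab}^{\top}$; here the only thing to check is that the Schur complement of $s\Sigma$ is $s$ times that of $\Sigma$ while the regression term $\Sigma_{ab}\Sigma_{bb}^{-1}$ does not involve $s$. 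Second, the $p(s \given y)$ line of Lemma~\ref{lem:normal_ig_posterior}, applied to $y_b$, gives $p(s \given y_b) = \IGPDF\big(s; (\lambda + d_b)/2, \, (\lambda + \Delta_b^2)/2\big)$ with $\Delta_b^2 = (y_b - \mu_b)^{\top}\Sigma_{bb}^{-1}(y_b - \mu_b)$.

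The final step is to put this posterior into canonical form so the scale-mixture identity can be reapplied. With $\omega_b = (\lambda + \Delta_b^2)/(\lambda + d_b)$ and $\lambda_{a|b} = \lambda + d_b$, Lemma~\ref{lem:igtransform} lets me substitute $s = \omega_b t$ with $t \sim \IGPDF(\lambda_{a|b}/2, \lambda_{a|b}/2)$, giving
\begin{equation}
p(y_a \given y_b) = \int \NPDF\big(y_a; \mu_{a|b}, \, t\,(\omega_b \widetilde{\Sigma}_{a|b})\big)\, \IGPDF(t; \lambda_{a|b}/2, \lambda_{a|b}/2)\, \md t = \TPDF\big(y_a; \mu_{a|b}, \omega_b \widetilde{\Sigma}_{a|b}, \lambda_{a|b}\big),
\end{equation}
where the last equality is the final clause of Lemma~\ref{lem:normal_ig_posterior}. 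Since $\omega_b \widetilde{\Sigma}_{a|b} = \tfrac{\lambda + \Delta_b^2}{\lambda + d_b}\big(\Sigma_{aa} - \Sigma_{ab}\Sigma_{bb}^{-1}\Sigma_{ab}^{\top}\big) = \Sigma_{a|b}$, this matches the stated conditional. I expect the main obstacle to be precisely this rescaling of the $s$-posterior via Lemma~\ref{lem:igtransform}: the Gaussian partitioned-mean/covariance algebra is routine, but it is the rescaling that produces both the inflation factor $(\lambda + \Delta_b^2)/(\lambda + d_b)$ in $\Sigma_{a|b}$ and the shift $\lambda \mapsto \lambda + d_b$ in the degrees of freedom.
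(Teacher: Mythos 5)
Your derivation is correct. Note, however, that the paper does not actually prove this lemma at all: its ``proof'' is a one-line deferral to an external reference (\citealp{roth2012multivariate}), so there is no in-paper argument to compare against. What you supply is a genuine, self-contained derivation built entirely from the paper's own toolkit: the Gaussian scale-mixture representation and the $s$-posterior from Lemma~\ref{lem:normal_ig_posterior}, plus the rescaling identity of Lemma~\ref{lem:igtransform}. Each step checks out: the regression coefficient $\Sigma_{ab}\Sigma_{bb}^{-1}$ is indeed scale-free because the factor $s$ cancels between $s\Sigma_{ab}$ and $(s\Sigma_{bb})^{-1}$, while the Schur complement picks up a single factor of $s$; the posterior $p(s \given y_b) = \IGPDF\bigl(s;(\lambda+d_b)/2,(\lambda+\Delta_b^2)/2\bigr)$ follows from Lemma~\ref{lem:normal_ig_posterior} applied to the $y_b$-marginal; and the substitution $s=\omega_b t$ via Lemma~\ref{lem:igtransform} correctly restores the canonical $\IGPDF(\lambda_{a|b}/2,\lambda_{a|b}/2)$ form so that the final clause of Lemma~\ref{lem:normal_ig_posterior} can be reapplied, producing both the inflation factor $(\lambda+\Delta_b^2)/(\lambda+d_b)$ on the conditional scale matrix and the degrees-of-freedom shift $\lambda\mapsto\lambda+d_b$. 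You correctly identify the rescaling step as the crux; this is also precisely the mechanism the paper exploits later in Supp.~F when it reparameterizes the scale variable after each observation, so your proof has the additional merit of making the lemma's role in the robust filter derivation transparent rather than opaque behind a citation.
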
 
\begin{proof}
See \cite{roth2012multivariate} for a derivation.
\end{proof}

To derive inference in the robust model, we start from $k = 1$ and show how we perform filtering for an entire iteration. While this makes the description longer, we believe it to be more informative for the reader. We begin with prediction of $x_1$ given no history ($y_{1:0} = \emptyset$). The predictive distribution of $x_1$ is then
\begin{align}
    \tilde{p}(x_1 \given y_{1:0}, s) &= \int p(x_1 \given x_0, s) p(x_0 \given y_{1:0}, s) \md x_0 \\
    \tilde{p}(x_1 \given s) &= \int p(x_1 \given x_0, s) p(x_0 \given s) \md x_0 \\
    &= \int \NPDF(x_1 ; f_{\theta}(x_0), s Q_0) \NPDF(x_0 ; \mu_0, s P_0) \md x_0 \label{eq:coefpredint1} \\
    &= \NPDF(x_1 ; f_{\theta}(\mu_0), s (Q_0 + F_1 P_0 F_1^{\top})),
\end{align}
where $F_1$ is defined as in the main text. Writing $\bar{\mu}_1 = f_{\theta}(\mu_0)$ and $\bar{P}_1 = Q_0 + F_1 P_0 F_1^{\top}$ we get $\tilde{p}(x_1 \given s) = \NPDF(x_1 ; \bar{\mu}_1, s \bar{P}_1)$. Next, we move to the dictionary update. We first have
\begin{align}
    \tilde{p}(y_1 \given c, y_{1:0}, s) &= \int p(y_1 \given c, x_1, s) p(x_1 \given y_{1:0}, s) \md x_1 \\
    \tilde{p}(y_1 \given c, s) &= \int p(y_1 \given c, x_1, s) p(x_1 \given s) \md x_1 \\
    &= \int \NPDF(y_1 ; C x_1, s R_0) \NPDF(x_1 ; \bar{\mu}_1, s \bar{P}_1) \md x_1 \\
    &= \NPDF(y_1 ; C \bar{\mu}_1, s (R_0 + C \bar{P}_1 C^{\top})).
\end{align}
As in PSMF, we use the approximation $C \bar{P}_1 C^{\top} \approx \eta_1 \otimes I_d$ where $\eta_1 = \Tr(R_0 + C_0 \bar{P}_1 C_0^{\top})/d$. We write this as $\tilde{p}(y_1 \given c, s) = \NPDF(y_1 ; H_1 c, s G_1)$ with $H_1 = \bar{\mu}_1^{\top} \otimes I_d$ and $G_1 = \eta_1 \otimes I_d$. We again assume $\tilde{p}(c \given y_{1:0}, s) = \NPDF(c ; c_0, s L_0)$ using $L_0 = V_0 \otimes I_d$, such that
\begin{align}
    \tilde{p}(c, y_1 \given y_{1:0}, s) &= \tilde{p}(y_1 \given c, y_{1:0}, s) \tilde{p}(c \given y_{1:0}, s) \label{eq:robustDictJoint1} \\
    \tilde{p}(c, y_1 \given s) &= \tilde{p}(y_1 \given c, s) \tilde{p}(c \given s) \\
    &= \NPDF(y_1 ; H_1 c, s G_1) \NPDF(c ; c_0, s L_0) \\
    &= \NPDF\left(
    \begin{bmatrix}
    c \\
    y_1
    \end{bmatrix}
    ;
    \begin{bmatrix}
    c_0 \\
    H_1 c_0
    \end{bmatrix}
    ,
    s \begin{bmatrix}
    L_0 & L_0 H_1^{\top} \\
    H_1 L_0 & H_1 L_0 H_1^{\top} + G_1
    \end{bmatrix}
    \right)
\end{align}
Integrating out $s$ in this expression gives
\begin{equation}
    \tilde{p}(c, y_1) = \TPDF\left(
    \begin{bmatrix}
    c \\
    y_1
    \end{bmatrix}
    ;
    \begin{bmatrix}
    c_0 \\
    H_1 c_0
    \end{bmatrix}
    ,
    \begin{bmatrix}
    L_0 & L_0 H_1^{\top} \\
    H_1 L_0 & H_1 L_0 H_1^{\top} + G_1
    \end{bmatrix}
    ,
    \lambda_0
    \right)
\end{equation}
Conditioning on $y_1$ and using Lemma~\ref{lem:partitioned_t} yields $\tilde{p}(c \given y_1) = \TPDF(c ; c_1, L_1, \lambda_0 + d)$ with
\begin{align}
    \label{eq:robustDictUpdate1}
    c_1 &= c_0 + L_0 H_1^{\top} \left[ H_1 L_0 H_1^{\top} + G_1 \right]^{-1} (y_1 - H_1 c_0) \\
    L_1 &= \phi_1 \left( L_0 H_1^{\top} \left[ H_1 L_0 H_1^{\top} + G_1 \right]^{-1} H_1 L_0 \right) \\
    \varphi_1 &= \frac{ \lambda_0 + (y_1 - H_1 c_0)^{\top} \left[ H_1 L_0 H_1^{\top} + G_1 \right]^{-1} (y_1 - H_1 c_0)}{ \lambda_0 + d}.
\end{align}
This is the \textbf{robust PSMF dictionary update}. We see that the mean is updated as in PSMF by comparing to \eqref{proof:KalmanMeanUpdate}, and that the covariance update has an additional multiplicative factor $\varphi_1$. These expressions can be simplified by plugging in the definitions of $L_0$, $H_1$, and $G_1$, as in Supp.~\ref{sec:appProofProp1}. Observe that $\tilde{p}(c \given y_1)$ can no longer be written as an infinite scale mixture with scale variable $s$, as they now differ in degrees of freedom. We will revisit this point below.

For the coefficient update we proceed analogously. First, note that
\begin{align}
    \tilde{p}(y_1 \given x_{0:1}, s) &= \int p(y_1 \given c, x_{0:1}, s) p(c \given y_{1:0}, s) \md c \\
    \tilde{p}(y_1 \given x_1, s) &= \int p(y_1 \given c, x_1, s) p(c \given s) \md c \\
    &= \int \NPDF(y_1 ; (x_1^{\top} \otimes I_d) c, s R_0) \NPDF(c ; c_0, s L_0) \md c \label{eq:y1_given_x1_s} \\
    &= \NPDF(y_1 ; (x_1^{\top} \otimes I_d) c_0, s (R_0 +  x_1^{\top} V_{0} x_1 \otimes I_d))
\end{align}
As in the main text, we use the approximation $x_1^{\top} V_{0} x_1 \approx \bar{\mu}_1^{\top} V_0 \bar{\mu}_1$ and introduce
\begin{equation}
    \bar{R}_0 = R_0 + \bar{\mu}_1^{\top} V_0 \bar{\mu}_1,
\end{equation}
such that $\tilde{p}(y_1 \given x_1, s) = \NPDF(y_1 ; C_0 x_1, s \bar{R}_0)$. We then find the joint distribution between $x_1$ and $y_1$ as follows
\begin{align}
    \tilde{p}(x_1, y_1 \given y_{1:0}, s) &= \tilde{p}(y_1 \given y_{1:0}, x_1, s) \tilde{p}(x_1 \given y_{1:0}, s) \\
    \tilde{p}(x_1, y_1 \given s) &= \tilde{p}(y_1 \given x_1, s) \tilde{p}(x_1 \given s) \\
    &= \NPDF(y_1 ; C_0 x_1, s \bar{R}_0) \NPDF(x_1 ; \bar{\mu}_1, s \bar{P}_1) \\
    &= \NPDF\left(
    \begin{bmatrix}
    x_1 \\
    y_1
    \end{bmatrix}
    ;
    \begin{bmatrix}
    \bar{\mu}_1 \\
    C_0 \bar{\mu}_1
    \end{bmatrix}
    ,
    s \begin{bmatrix}
    \bar{P}_1 & \bar{P}_1 C_0^{\top} \\
    C_0 \bar{P}_1 & C_0 \bar{P}_1 C_0^{\top} + \bar{R}_0
    \end{bmatrix}
    \right). \label{eq:robust_x1y1_joint_normal}
\end{align}
Integrating out $s$ in this expression gives
\begin{equation}
    \tilde{p}(x_1, y_1) = \TPDF\left(
    \begin{bmatrix}
    x_1 \\
    y_1
    \end{bmatrix}
    ;
    \begin{bmatrix}
    \bar{\mu}_1 \\
    C_0 \bar{\mu}_1
    \end{bmatrix}
    ,
    \begin{bmatrix}
    \bar{P}_1 & \bar{P}_1 C_0^{\top} \\
    C_0 \bar{P}_1 & C_0 \bar{P}_1 C_0^{\top} + \bar{R}_0
    \end{bmatrix}
    ,
    \lambda_0
    \right).
\end{equation}
Conditioning on $y_1$ and using Lemma~\ref{lem:partitioned_t} gives $p(x_1 \given y_1) = \TPDF(x_1 ; \mu_1, P_1, \lambda_0 + d)$ with
\begin{align}
    \label{eq:robustCoefUpdate1}
    \mu_1 &= \bar{\mu}_1 + \bar{P}_1 C_0^{\top} \left[ C_0 \bar{P}_1 C_0^{\top} + \bar{R}_0 \right]^{-1} (y_1 - C_0 \bar{\mu}_1) \\
    P_1 &= \omega_1 \left( \bar{P}_1 - \bar{P}_1 C_0^{\top} \left[ C_0 \bar{P}_1 C_0^{\top} + \bar{R}_0 \right]^{-1} C_0 \bar{P}_1 \right) \\
    \omega_1 &= \frac{ \lambda_0 + (y_1 - C_0 \bar{\mu}_1)^{\top} \left[ C_0 \bar{P}_1 C_0^{\top} + \bar{R}_0 \right]^{-1} (y_1 - C_0 \bar{\mu}_1) }{\lambda_0 + d}.
\end{align}
This is the \textbf{robust PSMF coefficient update}. Again we see that the mean update for $\mu_1$ is the same as in vanilla PSMF, while the covariance update has an additional multiplicative factor $\omega_1$. By introducing $\Delta_1^2 = (y_1 - C_0 \bar{\mu}_1)^{\top} \left[ C_0 \bar{P}_1 C_0^{\top} + \bar{R}_0 \right]^{-1} (y_1 - C_0 \bar{\mu}_1)$ we can simplify this factor to $\omega_1 = (\lambda_0 + \Delta_1^2)/(\lambda_0 + d)$.

Finally, we can compute the posterior of the scale variable, $s$, using Bayes' theorem,
\begin{equation}
    \tilde{p}(s \given y_1) = \frac{\tilde{p}(y_1 \given s) p(s)}{\tilde{p}(y_1)}.
\end{equation}
We can obtain $\tilde{p}(y_1 \given s)$ from \eqref{eq:robust_x1y1_joint_normal}, which yields
\begin{align}
    \tilde{p}(y_1 \given s) = \NPDF(y_1 ; C_0 \bar{\mu}_1, s (C_0 \bar{P}_1 C_0^{\top} + \bar{R}_0)).
\end{align}
Integrating out $s$ gives $\tilde{p}(y_1) = \TPDF(y_1 ; C_0 \bar{\mu}_1, C_0 \bar{P}_1 C_0^{\top} + \bar{R}_0, \lambda_0)$. Thus, by Lemma~\ref{lem:normal_ig_posterior} we have
\begin{equation}
    \label{eq:rob_s_given_y1}
    \tilde{p}(s \given y_1) = \IGPDF(s ; (\lambda_0 + d)/2, (\lambda_0 + \Delta_1^2)/2).
\end{equation}

Having now observed $y_1$, we proceed with the next iteration. Note that from the coefficient update we have obtained $p(x_1 \given y_1) = \TPDF(x_1 ; \mu_1, P_1, \lambda_0 + d)$. We can write this as a infinite scale mixture by defining $u \sim \IGPDF(u ; (\lambda_0 + d)/2, (\lambda_0 + d)/2)$ and introducing $p(x_1 \given y_1, u) = \NPDF(x_1; \mu_1, u P_1)$. The model definitions give the coefficient dynamics in terms of $s$, as $p(x_2 \given x_1, s) = \NPDF(x_2 ; f_{\theta}(x_1), sQ_0)$. This can be written in terms of $u$ by a simple change of variables $u = \omega_1^{-1} s$ and using Lemma~\ref{lem:igtransform} and \eqref{eq:rob_s_given_y1}, since
\begin{align}
    \tilde{p}(x_2 \given x_1, y_1) &= \int p(x_2 \given x_1, s) p(s \given y_1) \md s \\
    &= \int \NPDF(x_2 ; f_{\theta}(x_1), s Q_0) \IGPDF(s ; (\lambda_0 + d)/2, (\lambda_0 + \Delta_1^2)/2) \md s \\
    &= \int \NPDF(x_2 ; f_{\theta}(x_1), u \cdot \omega_1 Q_0) \IGPDF(u ; (\lambda_0 + d)/2, (\lambda_0 + d)/2) \md u
\end{align}
where we find $p(x_2 \given x_1, u) = \NPDF(x_2 ; f_{\theta}(x_1), u \cdot \omega_1 Q_0)$. We then have that
\begin{align}
    \tilde{p}(x_2 \given y_1, u) &= \int \tilde{p}(x_2 \given x_1, u) \tilde{p}(x_1 \given y_1, u) \md x_1 \\
    &= \int \NPDF(x_2 ; f_{\theta}(x_1), u\cdot \omega_1 Q_0) \NPDF(x_1 ; \mu_1, u P_1) \md x_1,
\end{align}
which we recognize to be analogous to \eqref{eq:coefpredint1}. This expression also reveals how the noise covariance $Q_0$ is updated, as we may simply define $Q_1 = \omega_1 Q_0$. This gives $\tilde{p}(x_2 \given y_1, u) = \NPDF(x_2 ; \bar{\mu}_2, u \bar{P}_2)$ with $\bar{\mu}_2$ and $\bar{P}_2$ analogous to $\bar{\mu}_1$ and $\bar{P}_1$ above.

Similar reasoning can be applied to obtain the predictive distribution of $y_2$. From the dictionary update we have obtained $\tilde{p}(c \given y_1) = \TPDF(c ; c_1, L_1, \lambda_0 + d)$, which we can also write as a scale mixture with $u$ as $\tilde{p}(c \given y_1, u) = \NPDF(c ; c_1, u L_1)$. The model definition gives $p(y_2 \given x_2, C, s) = \NPDF(y_2 ; C x_2, s R_0)$. Again writing this in terms of $u$ by using the change of variables $u = \omega_1^{-1} s$ and Lemma~\ref{lem:igtransform} and \eqref{eq:rob_s_given_y1}, yields $p(y_2 \given x_2, C, u) = \NPDF(y_2 ; C x_2, u \cdot \omega_1 R_0)$. Combining these expressions gives
\begin{align}
    \tilde{p}(y_2 \given c, y_1, u) = \int \tilde{p}(y_2 \given x_2, C, u) \tilde{p}(c \given y_1, u) \md c = \int \NPDF(y_2 ; (x_2^{\top} \otimes I_d) c, u \cdot \omega_1 R_0) \NPDF(c ; c_1, u L_1) \md c,
\end{align}
which is analogous to \eqref{eq:y1_given_x1_s}. We also see that we can define $R_1 = \omega_1 R_0$ to update the measurement noise covariance.

We observe in the above derivation that after completing an entire iteration we have obtained a new scale variable $u \sim \IGPDF(u ; (\lambda_0 + d)/2, (\lambda_0 + d)/2)$, and that we have found update rules for the noise covariances $Q$ and $R$. This procedure is repeated at every step, and we can define appropriate notation for this process by setting $s_0 = s$ and $s_1 = u$, and generally have scale variables $s_k \sim \IGPDF(s_k ; \lambda_k/2, \lambda_k/2)$ with $\lambda_k = \lambda_{k-1} + d$. Thus, $s_k = \omega_k^{-1} s_{k-1}$ with $\omega_k = (\lambda_{k-1} + \Delta_k^2)/(\lambda_{k-1} + d)$, which corresponds to \cite{tronarp2019student}. The noise covariances are clearly updated as $Q_k = \omega_k Q_{k-1}$ and $R_k = \omega_k R_{k-1}$. Algorithm~\ref{alg:rPSMF} summarizes the steps of robust PSMF, including steps for parameter estimation using both the iterative and recursive approaches.

\begin{algorithm}[t]
	\small
\begin{algorithmic}[1]
\caption{Iterative and recursive rPSMF}\label{alg:rPSMF}
\State Initialize $\gamma$, $\theta_0$, $C_0$, $V_0$, $\mu_0$, $P_0$, 
$Q_{0}$, $R_{0}$.
\For{$i \geq 1$}
\State $C_0 = C_T$, $\mu_0 = \mu_T$, $P_0 = P_T$, $V_0 = V_T$.
\For{$1 \leq k \leq T$}
\State Predictive mean of $x_k$: $\bar{\mu}_k = f_{\theta_{i-1}}(\mu_{k-1})$ or $\bar{\mu}_k = f_{\theta_{k-1}}(\mu_{k-1})$
\State Predictive covariance of $x_k$ 
\begin{equation*}
	\bar{P}_k = F_k P_{k-1} F_k^\top + Q_{k-1} , \quad \mathrm{ where } \quad 
	F_k = \frac{\partial f(x)}{\partial x} \Big|_{x = \bar{\mu}_{k-1}}
\end{equation*}
\State Compute scaling factor for the dictionary update
\begin{align*}
    \varphi_k = \frac{\lambda_{k-1}}{\lambda_{k-1} + d} + \frac{(y_k - C_{k-1}\bar{\mu}_k)^{\top}(y_k - C_{k-1}\bar{\mu}_k)}{(\lambda_{k-1} + d)(\bar{\mu}_k^{\top} V_{k-1} \bar{\mu}_k + \eta_k)}
\end{align*}
\hskip\algorithmicindent\hskip\algorithmicindent where $\eta_k = \Tr(C_{k-1} \bar{P}_k C_{k-1}^{\top} + R_{k-1})/d$.
\State Mean and covariance updates of the dictionary
\begin{align*}
C_k = C_{k-1} + \frac{(y_k - C_{k-1} \bar{\mu}_k) \bar{\mu}_k^\top V_{k-1}^{\top}}{\bar{\mu}_k^\top V_{k-1} \bar{\mu}_k + \eta_k} \quad \text{and} \quad V_k = \varphi_k \left(
	V_{k-1} - \frac{
		V_{k-1} \bar{\mu}_k \bar{\mu}_k^\top V_{k-1}
	}{
		\bar{\mu}_k^\top V_{k-1} \bar{\mu}_k + \eta_k
	}  \right)
\end{align*}
\State Compute scaling factor for the coefficient update
\begin{align*}
    \omega_k = \frac{\lambda_{k-1} + (y_k - C_{k-1}\bar{\mu}_k)^{\top} S_k^{-1} (y_k - C_{k-1}\bar{\mu}_k)}{\lambda_{k-1} + d}
\end{align*}
\hskip\algorithmicindent\hskip\algorithmicindent where $S_k = C_{k-1} 
\bar{P}_k C_{k-1}^\top + \bar{R}_{k-1}$ and $\bar{R}_{k-1} = R_{k-1} + \bar{\mu}_k^{\top}V_{k-1}\bar{\mu}_k \otimes I_d$.
\State Mean and covariance updates of coefficients
\begin{align*}
\mu_k = \bar{\mu}_k + \bar{P}_k C_{k-1}^\top S_k^{-1} (y_k - C_{k-1} 
\bar{\mu}_k) \quad \text{and} \quad P_k = \omega_k (\bar{P}_k - \bar{P}_k C_{k-1}^\top 
S_k^{-1} C_{k-1} \bar{P}_k)
\end{align*}
\State Update noise covariances: $Q_k = \omega_k Q_{k-1}$ and $R_k = \omega_k R_{k-1}$
\State Update degrees of freedom: $\lambda_k = \lambda_{k-1} + d$.
\State Parameter update: $\theta_k = \theta_{k-1} + \gamma \nabla \log \tilde{p}_{\theta}(y_k | y_{1:k-1}) \big|_{\theta = \theta_{k-1}}$ \Comment{\textcolor{gray}{recursive version}}
\EndFor
\State Parameter update: $\theta_i = \theta_{i-1} + \gamma \sum_{k=1}^T \nabla \log \tilde{p}_{\theta}(y_k | y_{1:k-1}) \big|_{\theta = \theta_{i-1}}.$ \Comment{\textcolor{gray}{iterative version}}
\EndFor
\end{algorithmic}
\end{algorithm}

For completeness, we give the approximate negative marginal likelihood $\tilde{p}_{\theta}(y_k \given y_{1:k-1})$, similar to Sec.~\ref{sec:app:Gradients}. It follows that
\begin{align}
    \tilde{p}_{\theta}(y_k \given y_{1:k-1}) &= \iint \tilde{p}(y_k \given y_{1:k-1}, c, s_{k-1}) \tilde{p}(c \given y_{1:k-1}, s_{k-1}) \md c \md s_{k-1} \\
    &= \iint \NPDF(y_k ; H_k c, s_{k-1} G_k) \NPDF(c ; c_{k-1}, s_{k-1} L_{k-1}) \md c \md s_{k-1} \\
    &= \int \NPDF(y_k ; H_k c_{k-1}, s_{k-1} (H_k L_{k-1} H_k^{\top} + G_k)) \md s_{k-1} \\
    &= \TPDF(y_k ; H_k c_{k-1}, H_k L_{k-1} H_k^{\top} + G_k, \lambda_{k-1})
\end{align}
With $H_k c_{k-1} = C_{k-1} \bar{\mu}_k$ and $H_k L_{k-1} H_k^{\top} + G_k = (\bar{\mu}_k^{\top} V_{k-1} \bar{\mu}_k + \eta_k) \otimes I_d$ where $\bar{\mu}_k = f_{\theta}(\mu_{k-1})$, we find after a brief algebraic exercise that
\begin{align}
    - \log \tilde{p}_{\theta}(y_k \given y_{1:k-1}) &\stackrel{c}{=} \frac{d}{2} \log \left( \| f_{\theta}(\mu_{k-1}) \|^2_{V_{k-1}} + \eta_k \right) \\
    &\qquad\qquad + \left( \frac{\lambda_{k-1} + d}{2} \right) \log \left( 1 + \frac{ \| y_k - C_{k-1} f_{\theta}(\mu_{k-1}) \|^2 }{  \lambda_{k-1} \left( \| f_{\theta}(\mu_{k-1}) \|^2_{V_{k-1}}  + \eta_k \right)  }  \right)
\end{align}
where $\stackrel{c}{=}$ again denotes equality up to terms independent of $\theta$. Finally, we note that handling missing values in rPSMF is straightforward and follows the same reasoning as for PSMF in Supp.~\ref{app:Missing}.

\section{ADDITIONAL DETAILS FOR THE EXPERIMENTS}

\subsection{Experiment 1}\label{app:Experiment1}

\begin{figure}[tb]
\centering
\begin{subfigure}[b]{.45\textwidth}
\includegraphics[width=\textwidth,height=5cm]{images/paper_plot_rpsmf_fit.pdf}
\caption{Observed time series (\textcolor{SynBlue}{blue}) with unobserved future data (\textcolor{SynYellow}{yellow}) and the reconstruction (\textcolor{SynRed}{red}).}
\label{fig:app_figDatavsPredSynt}
\end{subfigure}\qquad
\begin{subfigure}[b]{.45\textwidth}
\includegraphics[width=\textwidth,height=2cm]{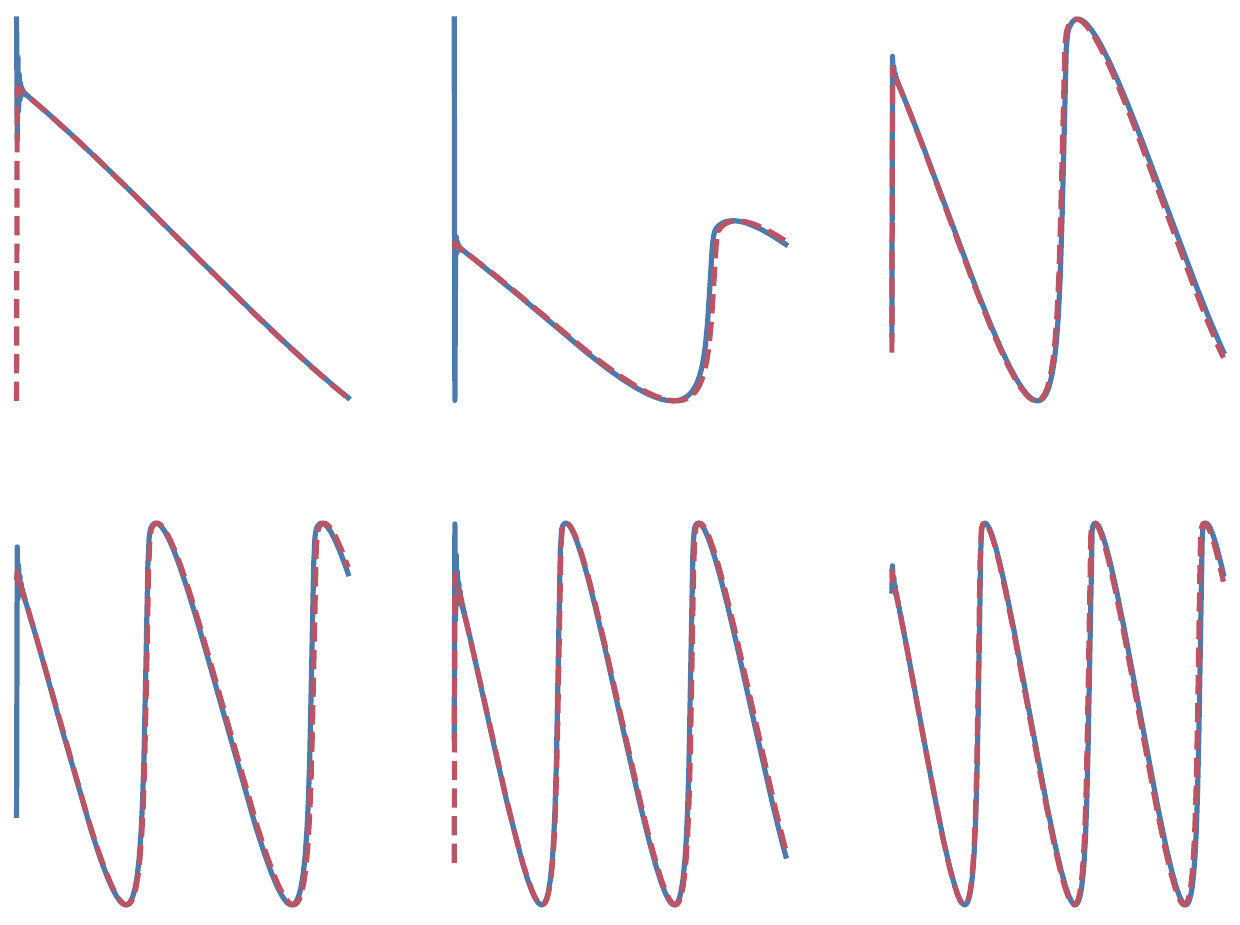}
\caption{True (\textcolor{SynBlue}{blue}) and predicted (\textcolor{SynRed}{red}) subspace.\label{fig:app_figTrueSubspaceSynt}}
\vspace{2ex}
\includegraphics[width=\textwidth,height=2cm]{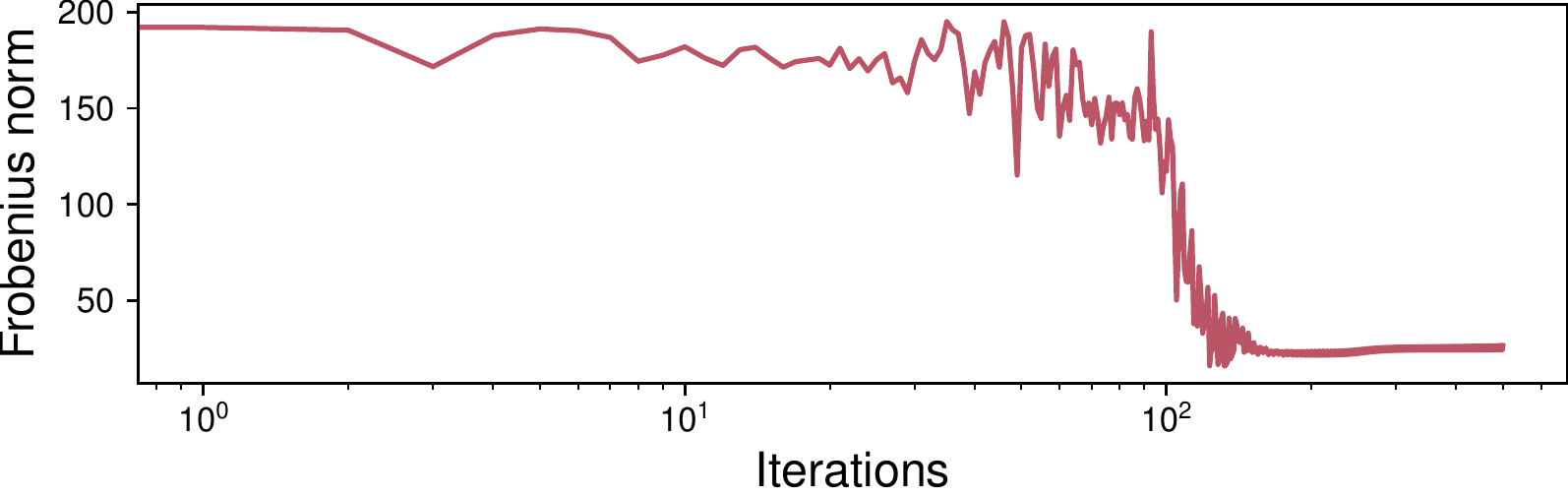}
\caption{Reconstruction error $\|Y - CX\|_F^2$. \label{fig:app_figFrobeniusSynt}}
\end{subfigure}
\caption{Fitting rPSMF on synthetic data with $t$-distributed noise. Figure (a) illustrates the fit to the observed and unobserved measurements. Figure (b) contains the true and reconstructed subspace, and (c) shows the reconstruction error over outer iterations of the iterative algorithm.\label{fig:app_synthetic_rPSMF}}
\end{figure}

\textbf{Optimization} In this experiment, we have used the Adam optimizer \cite{kingma2015adam}. In particular, instead of implementing the gradient step \eqref{eq:offlineSGD}, we replace it with the Adam optimizer. In order to do so, we define the gradient as $g_i = \nabla \log \tilde{p}_\theta(y_{1:n}) \Big|_{\theta = \theta_{i-1}}$. Upon computing the gradient $g_i$, we first compute the running averages
\begin{align}
m_i &= \beta_1 m_{i-1} + (1-\beta_1) g_i \\
v_i &= \beta_2  v_{i-1} + (1 - \beta_2) (g_i \odot g_i),
\end{align}
which is then corrected as
\begin{align}
\hat{m}_i &= \frac{m_i}{1 - \beta_1^i} \\
\hat{v}_i &= \frac{v_i}{1-\beta_2^i}.
\end{align}
Finally the parameter update is computed as
\begin{align}
\theta_i = \mathsf{Proj}_\Theta\left( \theta_{i-1} + \gamma \frac{\hat{m}_i}{\sqrt{\hat{v}_i} + \epsilon}\right),
\end{align}
where $\mathsf{Proj}$ denotes the projection operator which constrains the parameter to stay positive in each dimension where $\Theta = \bR_+ \times \cdots \times \bR_+ \subset \bR^6$ which is implemented by simple $\max$ operators. We choose the standard parameterization with $\gamma = 10^{-3}, \beta_1 = 0.9, \beta_2 = 0.999$ and $\epsilon = 10^{-8}$.

In these experiments we use an observed time series of length 500 and a series of unobserved future data of length 250. Fig.~\ref{fig:app_synthetic_rPSMF} corresponds to the figure in the main text, but additionally shows how the underlying subspace is recovered and how the Frobenius norm between the reconstructed data and the true data decreases with the number of iterations. Fig.~\ref{fig:app_synthetic_PSMF} shows a similar result for the PSMF method on normally-distributed data.

\begin{figure}[tb]
\centering
\begin{subfigure}[b]{.45\textwidth}
\includegraphics[width=\textwidth,height=5cm]{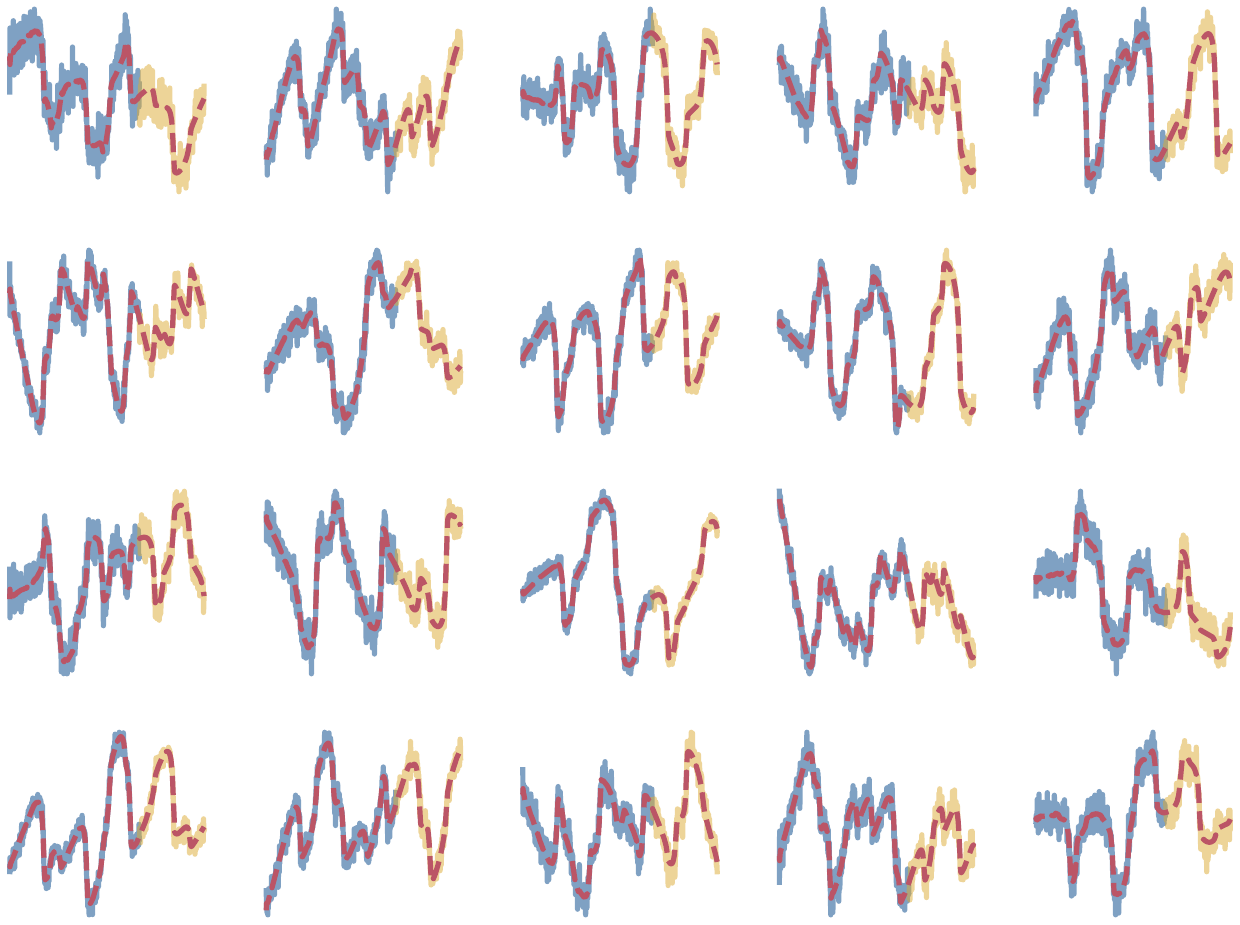}
\caption{Observed time series (\textcolor{SynBlue}{blue}) with unobserved future data (\textcolor{SynYellow}{yellow}) and the reconstruction (\textcolor{SynRed}{red}).}
%\label{fig:app_figDatavsPredSynt}
\end{subfigure}\qquad
\begin{subfigure}[b]{.45\textwidth}
\includegraphics[width=\textwidth,height=2cm]{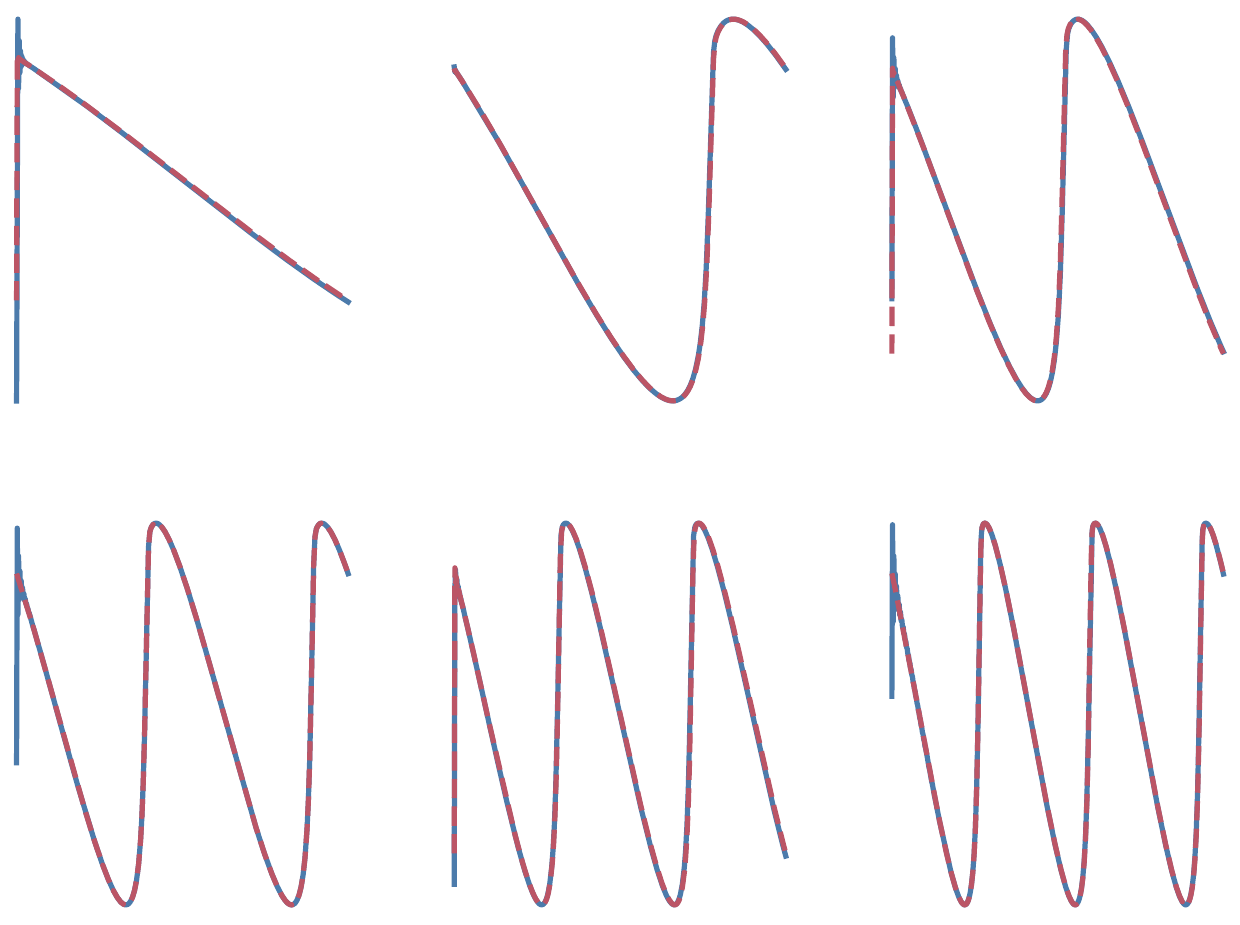}
\caption{True (\textcolor{SynBlue}{blue}) and predicted (\textcolor{SynRed}{red}) subspace.}%\label{fig:app_figTrueSubspaceSynt}}
\vspace{2ex}
\includegraphics[width=\textwidth,height=2cm]{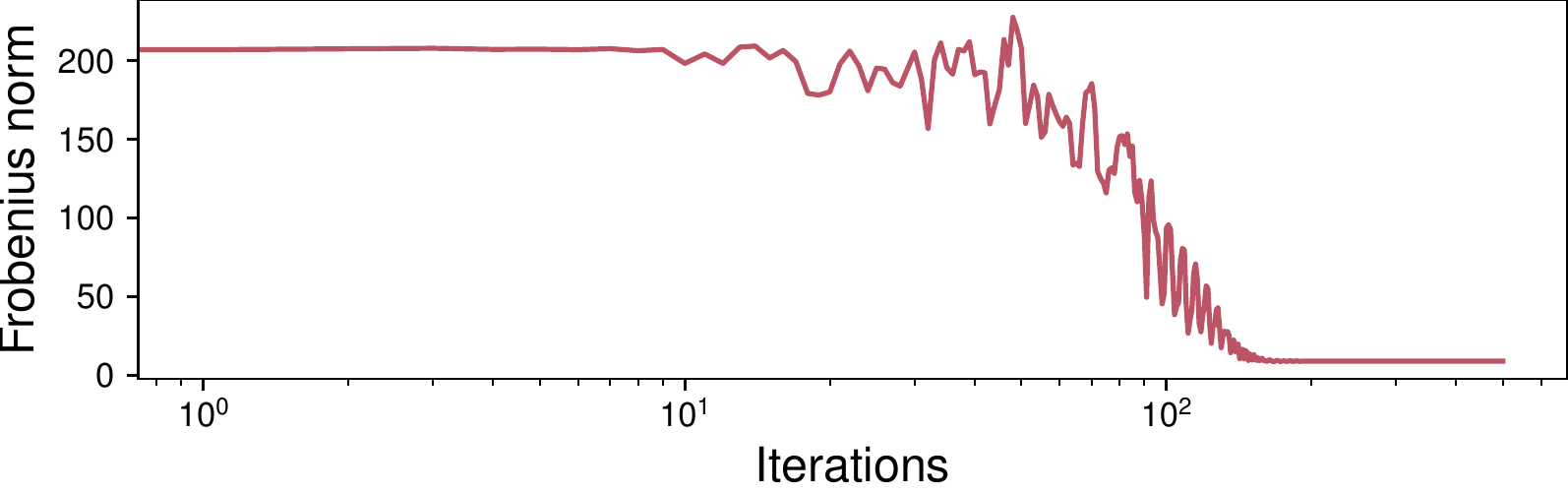}
\caption{Reconstruction error $\|Y - CX\|_F^2$.}% \label{fig:app_figFrobeniusSynt}}
\end{subfigure}
\caption{Fitting PSMF on synthetic data with normally distributed noise. Figure (a) illustrates the fit to the observed and unobserved measurements. Figure (b) contains the true and reconstructed subspace, and (c) shows the reconstruction error over outer iterations of the iterative algorithm.\label{fig:app_synthetic_PSMF}}
\end{figure}

\subsection{Experiment 2}\label{app:Experiment2}

\subsubsection{Data generation and the experimental setup}
We generate periodic time series using pendulum differential equations as the true subspace. For this experiment, we generate $d = 20$ dimensional data where $d_2 = 3$ of them undergo a structural change. In order to test the method, we generate $1000$ synthetic datasets. One such dataset is given in Fig.~\ref{fig:syntheticdata}.
\begin{figure}[h]
    \centering
    \includegraphics[scale=0.74]{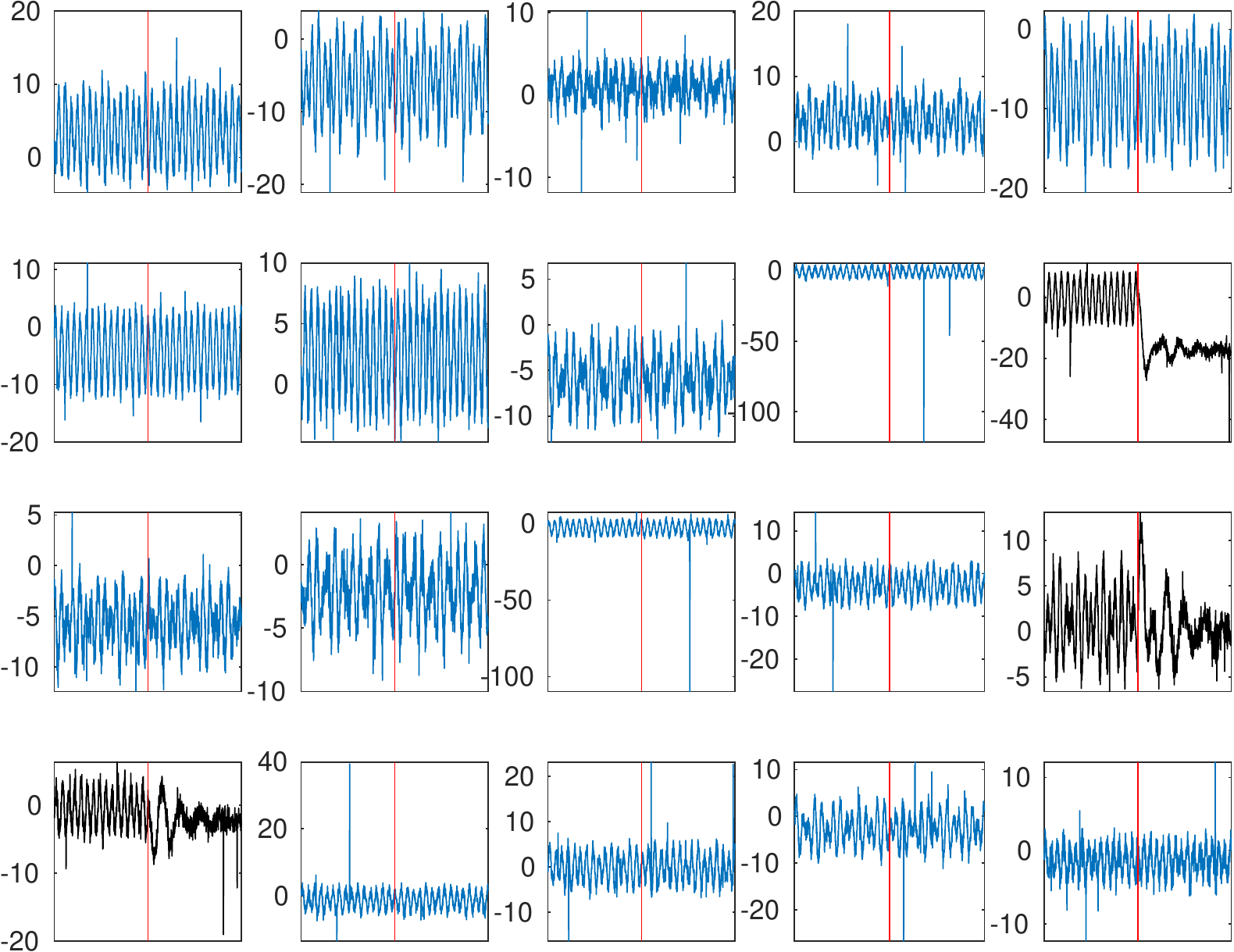}
    \caption{One instance of the $1000$ different synthetic datasets used in Sec.~\ref{sec:ExperimentChangepoint}. The dimensions which exhibit a structural change can be seen in black. The data contain outliers and the true changepoint can be seen as marked by the vertical red line.}
    \label{fig:syntheticdata}
\end{figure}
We generate data with $n = 1200$ and use the data after the data point $n_0 = 400$ to estimate changepoints, as PSMF has to converge to a stable regime before it can be used to detect changepoints. The true changepoint is at $n_c = 601$.
\subsubsection{The GP subspace model}
In this subsection, we provide the details of the discretization of the Mat\'{e}rn-$3/2$ SDE. Particularly, we consider the SDE \cite{sarkka2013spatiotemporal}
\begin{align}\label{eq:SDE_GP_app}
\frac{\md \mathsf{x}_i(t)}{\md t} = \mathsf{F}\mathsf{x}_i(t) + \begin{bmatrix}
0 \\
1
\end{bmatrix} w_i(t)
\end{align}
where $\mathsf{x}_i(t) = [x_i(t), \md x_i(t) / \md t]$ and $\kappa = \sqrt{2\nu}/{\ell}$ and
\begin{align}
\mathsf{F} = \begin{bmatrix}
0 & 1 \\
-\kappa^2 & - 2\kappa
\end{bmatrix}.
\end{align}
Given a step-size $\gamma$, the SDE~\eqref{eq:SDE_GP_app} can be written as a linear dynamical system
\begin{align}
x_{i,k} = A_i x_{i,k-1} + Q_i^{1/2} u_{i,k}
\end{align}
where $A_i = \textnormal{expm}(\gamma F)$ where $\textnormal{expm}$ denotes the matrix exponential and $Q_i = P_\infty - A_i P_\infty A_i^\top$ and
\begin{align}
P_\infty =\begin{bmatrix}
\sigma^2 & 0 \\
0 & 3 \sigma^2 / \ell^2
\end{bmatrix}.
\end{align}
Finally, we construct our dynamical system as
\begin{align}
x_k = A x_{k-1} + Q^{1/2} u_k
\end{align}
where $x_k = [x_{1,k},\ldots,x_{r,k}]^\top \in \bR^{2r}$ and
\begin{align}
A = I_r \otimes A_i \quad \quad \textnormal{and} \quad \quad Q = I_r \otimes Q_i.
\end{align}
Using these system matrices, we define $H_i = [1,0]$ and $H = I_r \otimes H_i$ and finally define the probabilistic model
\begin{align}
p(C) &= \mathcal{MN}(C;C_0,I_d,V_0), \\
p(x_0) &= \NPDF(x_0; \mu_0, P_0), \\
p(x_k | x_{k-1}) &= \NPDF(x_k;Ax_{k-1}, Q),  \\
p(y_k | x_k, C) &= \NPDF(y_k; C H x_k, R).
\end{align}
Inference in this model can be done via a simple modification of the Algorithm~\ref{algDF} where $H$ matrix is involved in the computations. Fig.~\ref{fig:GPfeatures} illustrates the learned GP features $r = 4$ and two change points.

\begin{figure}[tb]
\centering
\includegraphics[height=4cm]{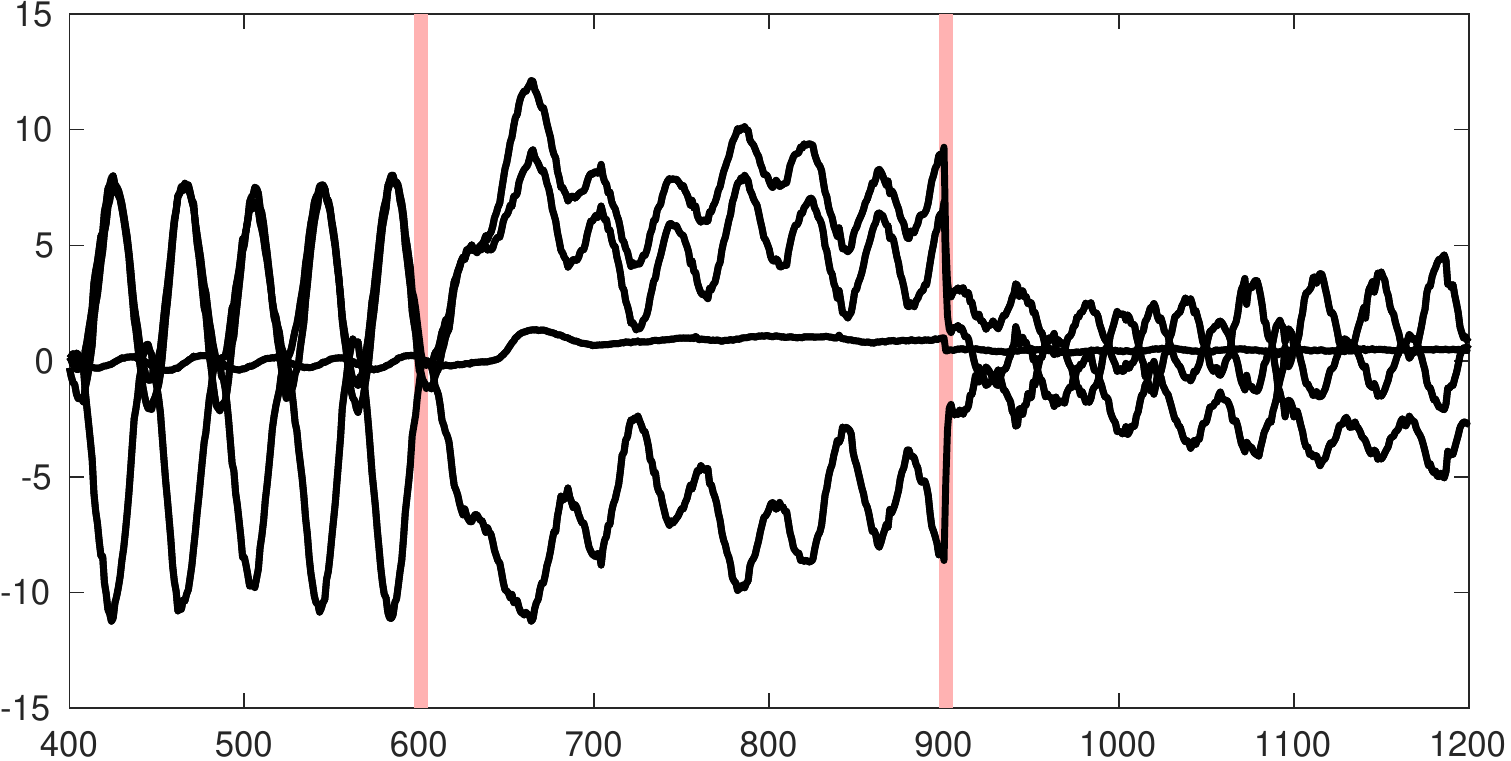}
\caption{An illustration of the learned GP features vs. true changepoints for $r=4$ and two changepoints.}
\label{fig:GPfeatures}
\end{figure}

\subsection{Experiment 3}
\label{app:Experiment3}

All experiments where run on a Linux machine with an AMD Ryzen 5 3600 processor and 32GB of memory. Additional results on different missing percentages are shown in Table~\ref{tab:results_real_impute_app} and Table~\ref{tab:results_real_cover_app}. We again observe excellent imputation performance of the proposed methods.

\begin{table}[ht]
\caption{Imputation error and runtime on several datasets using 20\% and 40\% missing values, averaged over 100 random repetitions. An asterisk marks offline methods.\label{tab:results_real_impute_app}}
	\small
    \centering
    \begin{subtable}[t]{.9\textwidth}
        \caption{20\% missing data\label{tab:results_real_impute_20}}
        %% DO NOT EDIT - AUTOMATICALLY GENERATED FROM RESULTS!
%% This table requires booktabs, amsmath, and multirow!
%% Table for missing percentage 20
\begin{tabular}{lrrrrrrrrrr}
\toprule
 & \multicolumn{5}{c}{Imputation RMSE} & \multicolumn{5}{c}{Runtime (s)} \\\cmidrule(lr){2-6} \cmidrule(lr){7-11}
 & NO$_2$ & PM10 & PM25 & S\&P500 & Gas & NO$_2$ & PM10 & PM25 & S\&P500 & Gas \\ \cmidrule(lr){2-6} \cmidrule(lr){7-11}
PSMF & $\underset{{\scriptscriptstyle (0.10)}}{\textbf{5.52}}$ & $\underset{{\scriptscriptstyle (0.38)}}{\textbf{7.26}}$ & $\underset{{\scriptscriptstyle (0.52)}}{3.42}$ & $\underset{{\scriptscriptstyle (1.93)}}{9.95}$ & $\underset{{\scriptscriptstyle (0.57)}}{\textbf{4.19}}$ & 2.71 & 2.59 & 1.92 & 9.42 & 101.16\\
rPSMF & $\underset{{\scriptscriptstyle (0.14)}}{5.53}$ & $\underset{{\scriptscriptstyle (0.46)}}{7.47}$ & $\underset{{\scriptscriptstyle (0.52)}}{\textbf{3.40}}$ & $\underset{{\scriptscriptstyle (1.41)}}{\textbf{9.29}}$ & $\underset{{\scriptscriptstyle (0.51)}}{4.56}$ & 2.91 & 2.73 & 2.03 & 13.98 & 122.57\\
MLE-SMF & $\underset{{\scriptscriptstyle \;\;(0.51)}}{11.03}$ & $\underset{{\scriptscriptstyle (0.37)}}{9.46}$ & $\underset{{\scriptscriptstyle (0.63)}}{4.81}$ & $\underset{{\scriptscriptstyle \;\;(1.02)}}{30.23}$ & $\underset{{\scriptscriptstyle \;\;(14.84)}}{87.12}$ & 2.48 & 2.39 & 1.71 & 9.52 & 92.09\\
TMF & $\underset{{\scriptscriptstyle (0.14)}}{7.60}$ & $\underset{{\scriptscriptstyle (0.30)}}{7.95}$ & $\underset{{\scriptscriptstyle (0.50)}}{4.43}$ & $\underset{{\scriptscriptstyle \;\;(1.00)}}{34.96}$ & $\underset{{\scriptscriptstyle \;\;(8.85)}}{73.70}$ & 1.03 & 0.97 & 0.72 & 4.19 & 35.35\\
PMF* & $\underset{{\scriptscriptstyle \;\;(0.08)}}{10.47}$ & $\underset{{\scriptscriptstyle \;\;(0.26)}}{10.46}$ & $\underset{{\scriptscriptstyle (0.48)}}{3.97}$ & $\underset{{\scriptscriptstyle \;\;(1.80)}}{40.07}$ & $\underset{{\scriptscriptstyle \;\;(0.06)}}{23.54}$ & 2.14 & 1.90 & 0.68 & 3.12 & 31.78\\
BPMF* & $\underset{{\scriptscriptstyle (0.18)}}{9.03}$ & $\underset{{\scriptscriptstyle (0.28)}}{8.39}$ & $\underset{{\scriptscriptstyle (0.49)}}{3.61}$ & $\underset{{\scriptscriptstyle \;\;(0.93)}}{27.36}$ & $\underset{{\scriptscriptstyle \;\;(0.17)}}{17.70}$ & 3.11 & 4.48 & 3.05 & 4.15 & 92.50\\
\bottomrule
\end{tabular}
    \end{subtable}
    \vskip\baselineskip
    \begin{subtable}[t]{.9\textwidth}
        \caption{40\% missing data\label{tab:results_real_impute_40}}
        %% DO NOT EDIT - AUTOMATICALLY GENERATED FROM RESULTS!
%% This table requires booktabs, amsmath, and multirow!
%% Table for missing percentage 40
\begin{tabular}{lrrrrrrrrrr}
\toprule
 & \multicolumn{5}{c}{Imputation RMSE} & \multicolumn{5}{c}{Runtime (s)} \\\cmidrule(lr){2-6} \cmidrule(lr){7-11}
 & NO$_2$ & PM10 & PM25 & S\&P500 & Gas & NO$_2$ & PM10 & PM25 & S\&P500 & Gas \\ \cmidrule(lr){2-6} \cmidrule(lr){7-11}
PSMF & $\underset{{\scriptscriptstyle (0.18)}}{6.06}$ & $\underset{{\scriptscriptstyle (0.28)}}{7.72}$ & $\underset{{\scriptscriptstyle (0.23)}}{3.77}$ & $\underset{{\scriptscriptstyle \;\;(3.06)}}{13.87}$ & $\underset{{\scriptscriptstyle (1.63)}}{\textbf{8.75}}$ & 2.77 & 2.62 & 1.92 & 9.12 & 100.68\\
rPSMF & $\underset{{\scriptscriptstyle (0.27)}}{\textbf{5.96}}$ & $\underset{{\scriptscriptstyle (0.57)}}{\textbf{7.68}}$ & $\underset{{\scriptscriptstyle (0.29)}}{\textbf{3.67}}$ & $\underset{{\scriptscriptstyle \;\;(4.39)}}{\textbf{12.36}}$ & $\underset{{\scriptscriptstyle (2.28)}}{9.03}$ & 2.92 & 2.77 & 2.02 & 13.30 & 109.38\\
MLE-SMF & $\underset{{\scriptscriptstyle \;\;(0.49)}}{11.30}$ & $\underset{{\scriptscriptstyle (0.30)}}{9.55}$ & $\underset{{\scriptscriptstyle (0.31)}}{4.93}$ & $\underset{{\scriptscriptstyle \;\;(0.80)}}{30.14}$ & $\underset{{\scriptscriptstyle \;\;\;(26.65)}}{125.54}$ & 2.54 & 2.38 & 1.70 & 9.59 & 85.11\\
TMF & $\underset{{\scriptscriptstyle (0.12)}}{7.90}$ & $\underset{{\scriptscriptstyle (0.21)}}{8.27}$ & $\underset{{\scriptscriptstyle (0.31)}}{4.86}$ & $\underset{{\scriptscriptstyle \;\;(0.76)}}{34.78}$ & $\underset{{\scriptscriptstyle \;\;(10.60)}}{66.27}$ & 0.98 & 0.97 & 0.73 & 4.13 & 32.01\\
PMF* & $\underset{{\scriptscriptstyle \;\;(0.05)}}{10.54}$ & $\underset{{\scriptscriptstyle \;\;(0.15)}}{10.53}$ & $\underset{{\scriptscriptstyle (0.13)}}{4.11}$ & $\underset{{\scriptscriptstyle \;\;(1.81)}}{41.53}$ & $\underset{{\scriptscriptstyle \;\;(0.06)}}{24.12}$ & 1.73 & 1.51 & 0.54 & 2.43 & 24.75\\
BPMF* & $\underset{{\scriptscriptstyle (0.21)}}{9.46}$ & $\underset{{\scriptscriptstyle (0.18)}}{8.64}$ & $\underset{{\scriptscriptstyle (0.12)}}{3.72}$ & $\underset{{\scriptscriptstyle \;\;(0.64)}}{27.91}$ & $\underset{{\scriptscriptstyle \;\;(0.37)}}{19.10}$ & 4.26 & 4.07 & 2.92 & 3.16 & 82.44\\
\bottomrule
\end{tabular}
    \end{subtable}
\end{table}

\begin{table}[ht]
\caption{Average coverage proportion of the missing data by the $2\sigma$ uncertainty bars of the posterior predictive estimates for 20\% and 40\% missing values, averaged over 100 repetitions. \label{tab:results_real_cover_app}}
    \small
    \centering
    \begin{subtable}[t]{.45\textwidth}
        \caption{20\% missing data\label{tab:results_real_cover_20}}
        %% DO NOT EDIT - AUTOMATICALLY GENERATED FROM RESULTS!
%% This table requires booktabs and multirow!
%% Table for missing percentage 20
\begin{tabular}{lccccc}
\toprule
 & NO$_2$ & PM10 & PM25 & S\&P500 & Gas \\
\cmidrule(lr){2-6}
PSMF & 0.79 & 0.79 & \textbf{0.93} & 0.85 & \textbf{0.93} \\
rPSMF & \textbf{0.89} & \textbf{0.92} & 0.91 & \textbf{0.86} & 0.90 \\
MLE-SMF & 0.46 & 0.59 & 0.83 & 0.51 & 0.61 \\
\bottomrule
\end{tabular}
    \end{subtable}
    \qquad
    \begin{subtable}[t]{.45\textwidth}
        \caption{40\% missing data\label{tab:results_real_cover_40}}
        %% DO NOT EDIT - AUTOMATICALLY GENERATED FROM RESULTS!
%% This table requires booktabs and multirow!
%% Table for missing percentage 40
\begin{tabular}{lccccc}
\toprule
 & NO$_2$ & PM10 & PM25 & S\&P500 & Gas \\
\cmidrule(lr){2-6}
PSMF & 0.71 & 0.73 & \textbf{0.89} & 0.78 & \textbf{0.83} \\
rPSMF & \textbf{0.79} & \textbf{0.84} & 0.81 & \textbf{0.79} & 0.79 \\
MLE-SMF & 0.40 & 0.53 & 0.77 & 0.44 & 0.49 \\
\bottomrule
\end{tabular}
    \end{subtable}
\end{table}

\section{CONVERGENCE DISCUSSION}\label{app:Convergence}
\begin{figure}[!tb]
	\begin{center}
		\includegraphics[scale=0.4]{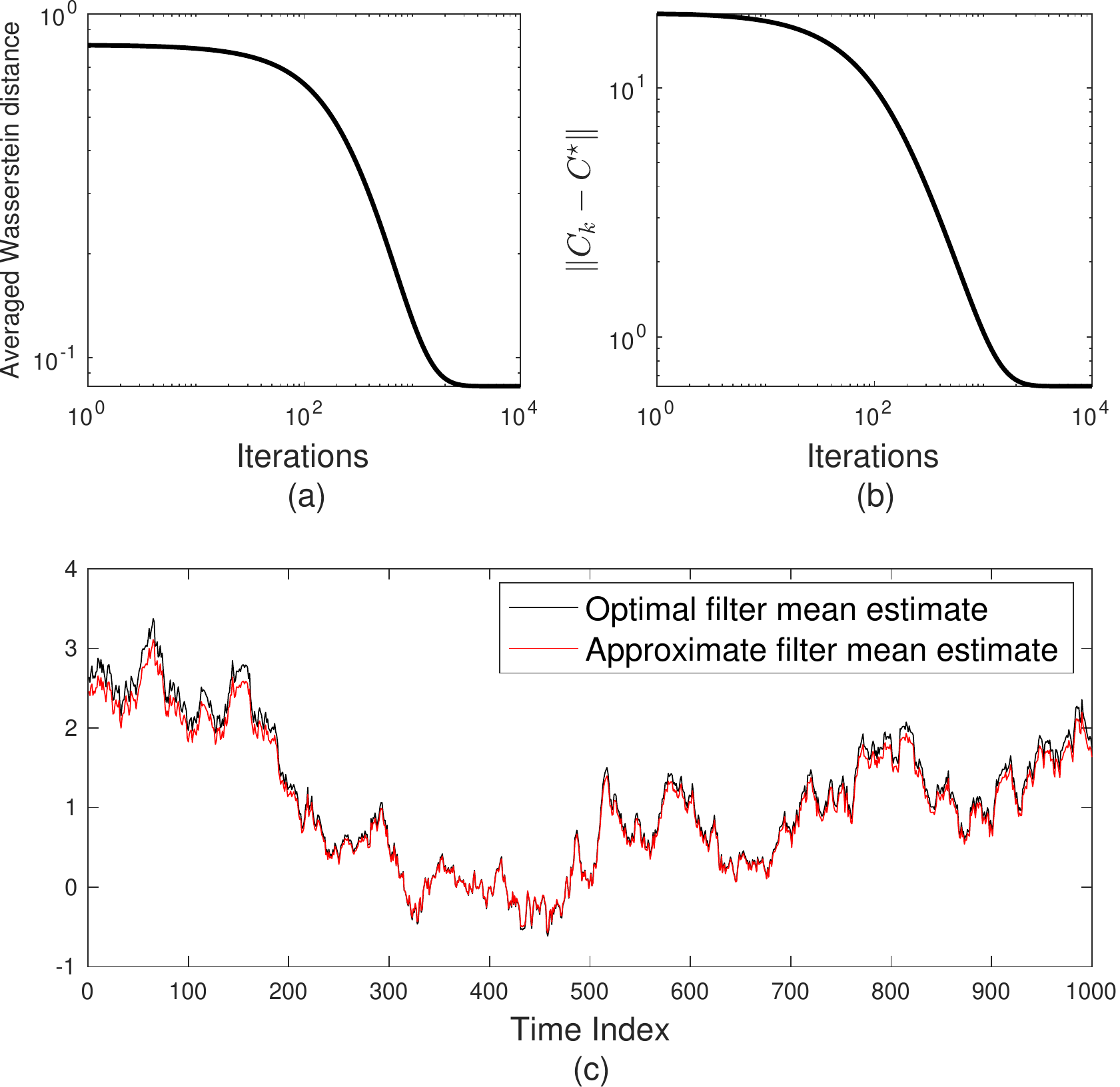}
	\end{center}
	\caption{(a) Convergence of the approximate posterior and true 
		posterior (with true $C^\star$) in averaged Wasserstein 
		distance for iterative PSMF. (b) Convergence of the mean $C_k$ 
		to $C^\star$. (c) Filter estimates given by the iterative PSMF 
		and the optimal filter.}
	\label{fig:Convergence}
\end{figure}

To gain insights in the convergence of our method, we have designed a simplified setup where the latent state trajectory is a one-dimensional random walk and observations are four-dimensional, and we have simulated a dataset consisting of size 1,000 where $C \in \bR^4$. We run the KF with the ground-truth value $C^\star$. We also run the iterative PSMF which also estimates $C$ as well as the hidden states. We have computed the distance between the sequence of optimal (Gaussian) filters constructed by the KF and the filters of the iterative PSMF in terms of the averaged Wasserstein distance over the path:
\begin{align}
{\overline{W}_2}(t) := \frac{1}{t} \sum_{k=1}^t W_2(p_\star(x_k | y_{1:k}),\tilde{p}(x_k|y_{1:k})).
\end{align}
We observe that the distance between the optimal and approximate filters over the entire path is uniformly bounded (see Fig.~\ref{fig:Convergence}(a)). We also observe that $C_k \to C^\star$ for this case, see Fig.~\ref{fig:Convergence}(b) and show the mean estimates are sufficiently close (Fig.~\ref{fig:Convergence}(c)).

More precisely, we simulate the following state-space model
\begin{align}
p(x_0) &= \NPDF(x_0;\mu_0,P_0), \\
p(x_k|x_{k-1}) &= \NPDF(x_k; x_{k-1}, Q), \\
p(y_k | x_k, C^\star) &= \NPDF(y_k; C^\star x_k, R),
\end{align}
where $C^\star \in \bR^4$ and $x_k \in \bR$, which leads to $y_k \in \bR^4$. In this case, the identifability problem is alleviated since $C^\star$ is a vector and we can test empirically whether the posterior provided by the PSMF for the states $p(x_k | y_{1:k})$ converges to the true posterior of the states $p_\star(x_k|y_{1:k})$.

Note that, the PSMF provides the filtering distribution of states as a Gaussian
\begin{align}
\tilde{p}(x_k| y_{1:k}) = \NPDF(x_k; \mu_k, P_k)
\end{align}
where $\mu_k,P_k$ are defined within Algorithm~\ref{algDF}. Since the data is generated from the model using $C^\star$, we also compute the optimal Kalman filter with $C^\star$ which we denote as $p_\star(x_k|y_{1:k})$. In order to test the convergence between the approximate filter provided by the PSMF $\tilde{p}(x_k|y_{1:k})$ and the true filter $p_\star(x_k|y_{1:k})$, we use the Wasserstein-2 distance which is defined as
\begin{align}
W_2(\mu,\nu) = \inf_{\Gamma \in \mathcal{C}(\mu,\nu)} \iint \|x-y\|^2 \Gamma(\md x, \md y)
\end{align}
where $\mathcal{C}(\mu,\nu)$ is the set of couplings whose marginals are $\mu$ and $\nu$ respectively. This Wasserstein-2 distance can be computed in closed form for two Gaussians, e.g., for $\mu = \NPDF(\mu_1,\Sigma_1)$ and $\nu = \NPDF(\mu_2,\Sigma_2)$, we have
\begin{align}
W_2(\mu,\nu)^2 = \|\mu_1 - \mu_2\|^2 + \Tr(\Sigma_1 + \Sigma_2 - 2(\Sigma_2^{1/2} \Sigma_1 \Sigma_2^{1/2})^{1/2}).
\end{align}
Hence, for a given sequence of filters $(\tilde{p}(x_k | y_{1:k}))_{k\geq 1}$ and $(p_\star(x_k | y_{1:k}))_{k\geq 1}$, we define the averaged Wasserstein distance for time $t$ as
\begin{align}
\overline{W}_2(t) = \frac{1}{t} \sum_{k=1}^t W_2(\tilde{p}(x_k | y_{1:k}),{p}_\star(x_k | y_{1:k})).
\end{align}
One can see from Fig.~\ref{fig:Convergence} that $\lim_{t\to\infty} \overline{W}_2(t) < \infty$ which implies that a convergence result can be proven for our method. We leave this exciting direction to future work.

\section{ADDITIONAL RESULTS FOR RECURSIVE PSMF}\label{app:recursivePSMF}
\begin{wrapfigure}{r}{0.4\textwidth}
    \vskip-\baselineskip % pull figure up
  \centering%
  \includegraphics[width=0.4\textwidth]{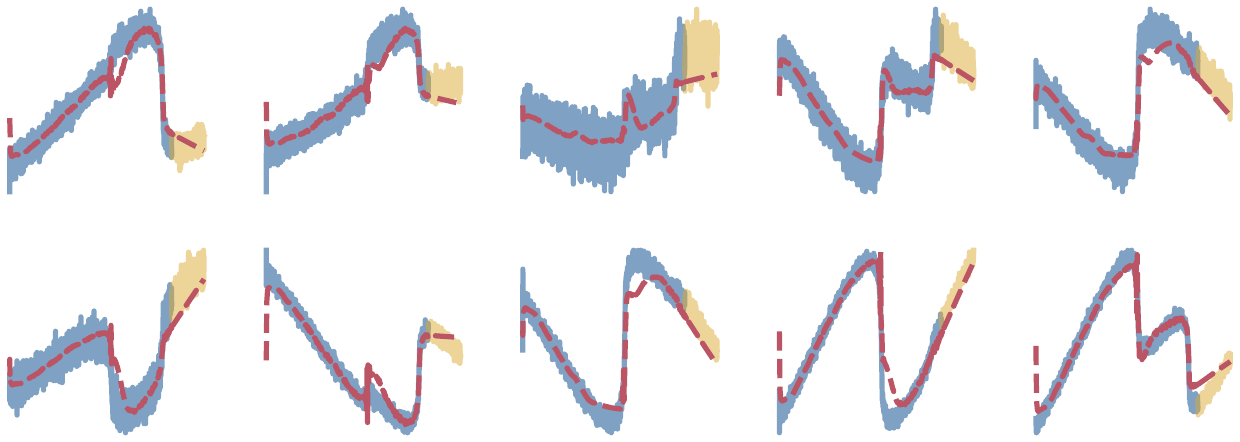}
  \caption{Recursive PSMF. Observed time series (\textcolor{SynBlue}{blue}) with unobserved future data (\textcolor{SynYellow}{yellow}) and the reconstruction from the model (\textcolor{SynRed}{red}).\label{fig:recursive}}
\end{wrapfigure}
In this section, we present an additional result using recursive PSMF to demonstrate the scalability of our method in a purely streaming setting.

Using the same setting of Sec.~4.1, we use a longer sequence ($n=4000$) with an additional prediction sequence of length $800$. This presents a challenging setting as we do not iterate over data and the algorithm observes the training data only once (i.e. the streaming setting). As can be seen from Fig.~\ref{fig:recursive}, recursive PSMF learns the underlying dynamics and has a successful out-of-sample prediction performance, even with a relatively long sequence into the future. This demonstrates the recursive version of our method can be used in a setting where iterating over data multiple times is impractical.
\end{document}